\newtheorem{theorem}{Theorem}
\newtheorem{definition}{Definition}
\titlespacing*{\section}{0pt}{1em}{1em}
\def\BState{\State\hskip-\ALG@thistlm}
\tikzstyle{block} = [draw,fill=blue!20,minimum size=2em]
\tikzstyle{branch}=[fill,shape=circle,minimum size=3pt,inner sep=0pt]
\definecolor{xuxi}{rgb}{0.59, 0.0, 0.09}
\definecolor{josh}{rgb}{0.0, 0.42, 0.24}
\definecolor{peng}{rgb}{0.0, 0.0, 1.0} 
\definecolor{new}{rgb}{1.0, 0.49, 0.0}
\definecolor{old}{rgb}{0.25, 0.25, 0.25}
\definecolor{proposed}{rgb}{1.0, 0.49, 0.0}
\definecolor{accepted}{rgb}{0.0, 0.0, 0.0} 
\newcommand{\algo}{\texttt{Exact}}
\newcommand{\bb}{\mathcal{B}}
\newcommand{\A}{\mathcal{A}}
\title{Fast Online Exact Solutions for Deterministic MDPs with Sparse Rewards}
\author{
  Joshua R. Bertram    \quad    Xuxi Yang    \quad    Peng Wei\\
  Iowa State University\\
  Ames, IA 50011 \\
  \texttt{\{bertram1, xuxiyang, pwei\}@iastate.edu} \\
}
\begin{document}

\maketitle

\begin{abstract}
Markov Decision Processes (MDPs) are a mathematical framework for modeling sequential decision making under uncertainty. The classical approaches for solving MDPs are well known and have been widely studied, some of which rely on approximation techniques to solve MDPs with large state space and/or action space.  However, most of these classical solution approaches and their approximation techniques still take much computation time to converge and usually must be re-computed if the reward function is changed. This paper introduces a novel alternative approach for exactly and efficiently solving deterministic, continuous MDPs with sparse reward sources.  When the environment is such that the "distance" between states can be determined in constant time, e.g. grid world, our algorithm offers $O( |R|^2 \times |A|^2  \times |S|)$, where $|R|$ is the number of reward sources, $|A|$ is the number of actions, and $|S|$ is the number of states.  Memory complexity for the algorithm is $O( |S| + |R| \times |A|)$. This new approach opens new avenues for boosting computational performance for certain classes of MDPs and is of tremendous value for MDP applications such as robotics and unmanned systems. This paper describes the algorithm and presents numerical experiment results to demonstrate its powerful computational performance. We also provide rigorous mathematical description of the approach.
\end{abstract}

\section{Introduction}

	A Markov Decision Process (MDP) is a method for modeling an agent's decision making process where the agent transitions from the current state to a next state by taking an action.  The value of the next state (and hence the action) is determined via a reward function that describes which states within the state space provide reward (or penalty.)  The solution to an MDP, termed the optimal policy, describes the optimal action to take from any state and maximizes \textcolor{accepted}{the expected cumulative} reward.  Following the optimal policy from any initial state will result in an optimal \textcolor{accepted}{sequence of actions} through the state space for maximizing \textcolor{accepted}{cumulative} reward.
    
	While MDPs are a powerful and general technique, they have difficulty scaling to large state or action spaces.  The original dynamic programming algorithms \textcolor{accepted}{such as value iteration and policy iteration} used to solve MDPs required a full representation of the state (or state-action) space to be stored in memory, rendering many complex MDP problems intractable.  Furthermore, most MDP algorithms are designed for stationary MDPs where the environment and the reward system are fixed, and require the solution to be recomputed if the reward changes.  Given the computation time required to solve any sizable MDP, this often precludes MDPs from being used in non-stationary reward systems unless the MDP can be reformulated to capture the changes in the reward function as part of the state itself.
    
    This paper presents an alternate approach \textcolor{accepted}{for} efficiently solving deterministic continuous MDPs exactly and presents a rigorous mathematical proof for single reward sources.  This paper also proposes an algorithm for solving deterministic continuous MDPs with multiple reward sources exactly and presents numerical experiment results to validate the algorithm and demonstrate its powerful computational performance.  We also demonstrate that the improved performance of the algorithm allows MDPs with changing reward functions to be computed online within execution times suitable for real-time systems, which will be of tremendous value for MDP applications in robotics, unmanned systems, and other domains with structured state spaces and sparse reward functions.  Our algorithm performance is compared with the standard value iteration algorithm to \textcolor{accepted}{demonstrate the substantial performance improvement}.
    
    In this paper we restrict ourselves to deterministic continuous MDPs with non-negative real rewards to constrain the length of the paper and will attempt to generalize to a broader class of MDPs in future work.
    
\section{Background}
	
\subsection{Markov Decision Processes}
	An MDP is described as having a state space $S$, \textcolor{accepted}{an} action space $A$, \textcolor{accepted}{a} transition function $T$, and \textcolor{accepted}{a} reward function $R$.  From a given state $s \in S$, an action $a \in A$ results in a next state $s' \in S$ with a probability $T( s, a, s' )$.   \textcolor{accepted}{Reward functions} are normally defined as immediate reward $R( s, a )$ obtained after taking action $a$ from state $s$ (though in this paper only consider reward $R( s )$ that depends only on the state.) MDPs assume that the Markov \textcolor{accepted}{property} holds for the problem, which states that the future state $s'$ depends only on the current state $s$ and the action taken $a$.  \textcolor{accepted}{ A \emph{policy} $\pi$ describes an action $a$ that should be performed at each state $s$; an optimal policy $\pi^*$ is a policy that generates maximum cumulative reward for every state.  Solving an MDP is equivalent to finding the optimal policy $\pi^*$.}

	In this initial paper, we restrict ourselves to deterministic continuous MDPs; for ease of illustration and without loss of generality, we use a two dimensional grid world as the example environment.  In a \emph{deterministic} MDP, the specified action (e.g., up, down, left, right) occurs without failure or error with a probability of $1.0$.  A \emph{continuous} MDP has no natural end of the sequence once it reaches a goal state; it passes through the goal state and continues to accrue reward without end. \textcolor{accepted}{In a grid world, the state corresponds to a cell of the grid. At each state, the agent has four actions to choose from: up, down, left, and right, except for boundary state, where the agent is not allowed to take actions which lead the agent to the boudary. Thus in boundary state, the agent has two or three actions to choose from.}
    
    Note that in this paper, we will use the following convention to differentiate between the state at time $t$ with $s^{(t)}$ with a superscript and parentheses and a particular state $s_k \in S$ with a subscript.  Thus, the state $s_k$ at time $t$ would be denoted $s_k^{(t)}$.  Similarly, an action $a_k$ and reward $r_k$ at time $t$ would be denoted as $a_k^{(t)}$ and $r_k^{(t)}$ respectively.  A superscript by itself indicates raising to a power, as in the discount factor $\gamma$ being raised to the power of $t$ in $\gamma^t$.  A state $s$ may refer to either a state $s \in S$ or a "current" state, where a state $s'$ always refers to a next state.

\subsection{Value Iteration}

    One of the most fundamental approaches to solving an MDP is \emph{value iteration}.  Value iteration is a dynamic programming approach to solving an MDP which iteratively determines the value of each state.  \textcolor{accepted}{ 	In an infinite horizon problem with a discount factor of $0.0 < \gamma < 1.0$ the expected cumulative reward (or value) at time step $t$ associated with a sequence of immediate rewards $r_t$ is:
	\begin{equation}
	V(s) = \sum_{t=0}^{\infty} \gamma^t r^{(t)}(s^{(t)},a^{(t)})
	\end{equation}
More generally, this is expressed recursively using the Bellman equation\cite{bellman1957dynamic}.
	\begin{equation}
	V_{n}(s^{(t)}) = \max_{a}  \Big[ r(s^{(t)},a) + \gamma \sum_{s^{(t+1)}} T(s^{(t+1)}| s^{(t)},a)  V_{n-1}(s^{(t+1)}) \Big],
	\end{equation}
where $n$ is the current iteration of the value iteration algorithm.  Value iteration obtains the optimal value when the policy and value function become stationary with respect to the Bellman operator $L$ satisfying the equation $V^* = L V^*$. (See Chapter 1 of \cite{yellowbook} for more information on this important topic as well as Bellman's original treatment \cite{bellman1957dynamic}.)
}

Examining the runtime complexity of value iteration, from \cite{yellowbook, papadimitriou1987complexity}, every iteration of the value iteration algorithm takes $O( |A| \times |S|^2 )$, and the overall maximum number of iterations needed by the algorithm is polynomial in $|S|$, $|A|$, and $\frac{1}{1-\gamma}\log{\frac{1}{1-\gamma}}$.
\section{Related Work}
	Many \textcolor{accepted}{classical} approaches to solving MDPs exist (see the excellent texts \cite{suttonbarto}, \cite{yellowbook}, \cite{DMUBook}, and \cite{russellnorvig}).  \textcolor{accepted}{The value iteration algorithm requires that the entire state space be held in memory as a table; exceedingly large state and action spaces will exhaust the memory available to even large, powerful computers. Thus at some level of complexity, value iteration becomes intractable due to the explosion in state-action space.}  
    
    Some algorithms address the explosion in state-action space size by taking advantage of structure of the problem to more compactly represent the MDP \cite{schuurmans2002direct}, \cite{guestrin2003efficient} which can lead to performance improvements.  The other major thrust is to create methods that use value function approximations or estimation techniques to avoid having to maintain a table in memory of each state \cite{powell2007approximate}.  Attempts to scale to high dimensional state spaces with neural nets \cite{DQNatari} and Monte-Carlo Tree Search (MCTS) \cite{kocsis2006bandit} represent recent attempts to deal with state space explosion, and MCTS especially has shown promising results for the game of Go \cite{silver2016mastering, silver2017mastering}.  
    
Attempts have been made to improve the performance of value iteration.  Asynchronous value iteration is a variation of value iteration that processes only certain states during each iteration which improves memory usage and can converge more quickly than value iteration in some cases \cite{DMUBook}.  Prioritized sweeping is another strategy that orders the processing of the states via some metric and after updating backpropagates to predecessors of the processed state \cite{moore1993prioritized, wingate2005prioritization}. An excellent summary of research in this area is provided in \cite{de2012new}.  Of particular interest to this paper, \cite{mcmahan2005fast} examines stochastic shortest path problems using an approach based off Dijkstra's algorithm (also discussing deterministic MDPs), building on work in \cite{bertsekas1995dynamic}, where they show that deterministic MDPs can reduce to Dijkstra's algorithm which has some performance benefits over value iteration.  In \cite{dai2007prioritizing}, methods are discussed that eliminate a priority queue typically required.  Of particular note is a backwards value iteration algorithm which computes value iteration from a terminating goal state, considering the problem in terms of states in which the goal state is reachable and working backwards from there.  While they do eliminate the overhead of a priority queue, but retain  a FIFO queue.  They similarly propose a forward value iteration algorithm that considers states that are reachable from the initial state and work forward from there, which they point out is equivalent to the LAO* algorithm in \cite{hansen2001lao}.

Varying the environment has been studied in \cite{szita2002varepsilon} with the restriction that the changes to the transition function $T$ are bounded by some small value $\epsilon$.  In \cite{yu2008markov}, reward is allowed to vary arbitrarily between time steps in a regret-based framework that relies on solving a linear program at each step.  Both the environment and rewards are varied in \cite{yu2009online} using a robust dynamic programming method which also ultimately relies on linear programming at each time step.  In \cite{even2005experts}, reward is allowed to change arbitrarily at each time step (possibly an adversarial manner) in a stochastic setting where $N$ black-box "experts" are provided; convergence bounds with respect to a fixed horizon and expected regret are provided on resulting policy changes, and performance is shown to be polynomial with respect to the $N$ experts and $|A|$ actions, though it relies on the existence of the expert algorithms (which are not within the scope of the paper itself).  In \cite{van2017hybrid}, reward functions are decomposed into simpler MDPs, each are solved with an neural net based approach similar to DQN, and the results of the simpler MDP Q-value functions are aggregated into a resulting global Q-value function, but the approach does not lead to a more fundamental understanding of how the value function is composed from the smaller MDPs.  Time-dependent MDPs (TMDPs) in \cite{boyan2001exact} are used to calculate MDPs with a continuous time dimension, claiming an exact solution in terms of piece-wise linear time steps but still relies on value iteration to approximate the true (exact) solution.  
   
To the authors' knowledge, no existing research provides direct solutions of the exact value function for this (or any other) class of MDPs in the manner described in this paper.

\section{Methodology}

    In this paper we describe an MDP in terms of a graph.  As will become clear later, we adopt this convention in order to take advantage of properties that will emerge to arrive at a method to calculate the exact solution to MDPs with reward functions $R(s)$ that depend only on state $s$.

\subsection{\textcolor{accepted}{MDP Transition Graphs}}

	An MDP can theoretically allow a transition from any state $s$ to any other state $s'$ by action $a$, which is defined by the transition function $T(s, a, s')$.  A zero value for a given $s$, $a$, $s'$ means no transition is possible, otherwise a probability from $(0, 1]$ is given and the state $s$ and $s'$ \textcolor{accepted}{is defined in this paper as \emph{connected}}.  The probabilities of transition from any state $s$ to all possible next states (including state $s$ itself) must total $1.0$.
	
	A \emph{transition graph} for a deterministic MDP can be defined where each node of the graph is a state $s$ and each edge of the graph is a possible action $a$.  The transition graph is a directed graph which may be cyclic.  (Note that this is just a graphical representation of the information contained in the transition function $T$.)  See Appendix 1 for an example transition graph.  Also note that a similar description is provided in \cite{papadimitriou1987complexity} for deterministic MDPs.   
    
For a deterministic transition graph, the \emph{distance} is defined as the minimum positive number of actions (or transitions) needed to move from a given state $s_0$ to a desired state $s_k$. 

Formally, suppose an MDP has $n$ states $\mathcal{S} = \{s_1, s_2, \cdots, s_n \}$. At each state, there are $m$ actions to choose: $\A = \{ a_1, a_2, \cdots, a_m \}$.  At time $t$, the state is denoted $s^{(t)} \in \mathcal{S}$ and action $a^{(t)} \in \A$.  Since this MDP is deterministic, the next state given current state and current action can be denoted as $s^{(t+1)} = T(s^{(t)}, a^{(t)})$, where $s^{(t)}$ and $a^{(t)}$ are the current state and current action, and the mapping $T: \mathcal{S} \times \A \rightarrow \mathcal{S}$ is the next state $s^{(t+1)}$ according to current state and action.  

Suppose the initial state is $s^{(0)}$, after taking action $a^{(0)}$, the next state is $s^{(1)} = T(s^{(0)}, a^{(0)})$.  After taking another action $a^{(1)}$, the third state will be $T(T(s^{(0)}, a^{(0)}), a^{(1)})$. For convenience, we denote this as $T(s^{(0)}, a^{(0)}, a^{(1)})$. More generally, if the initial state is $s^{(0)}$, after taking sequential actions $a^{(0)}, a^{(1)}, a^{(2)}, \cdots, a^{(t)}$, the agent will be at state $T(s^{(0)}, a^{(0)}, a^{(1)}, a^{(2)}, \cdots, a^{(t)})$.

\begin{definition}
For a deterministic MDP with finite states, if from state $s_i$, after taking finite actions, the agent can reach state $s_k$, then the distance from $s_i$ to $s_k$ is defined as: 
\begin{equation}
  \delta (s_i, s_k) = \min_t \{ t| T(s_{i}, a^{(1)}, a^{(2)}, \cdots, a^{(t)}) = s_k  \}
  \label{eq:conn_dist}
\end{equation}

If from state $s_i$, no matter what sequence of actions the agent takes, it cannot reach state $s_k$, then the distance from $s_i$ to $s_k$ is defined to be:

\begin{equation}
  \delta(s_i, s_k) = \infty
  \label{eq:inf_conn}
\end{equation}

Finally, we define the distance from a state to itself $\delta (s,s) = 0$ for any $s \in \mathcal{S}$. 
\end{definition}

Note that for a two dimensional grid world MDP, the distance from one state to another state is just the Manhattan distance with respect to the Cartesian coordinate of the grid cells.

\begin{definition}
  An MDP problem is said to be a fully connected MDP if all states can be reached from all other states:  $\forall s, s' \in S, \delta(s, s') < \infty$.
\end{definition}

By the definition of fully connected MDP, we wish to examine MDPs in which the agent can arrive at any state from any given initial state (that is, all states are potentially reachable through some set of actions.)  

\subsection{Exact Solutions for Single Reward Sources}

Given the definition of \textcolor{accepted}{MDP transition graph}, we now describe the exact solution to deterministic continuous MDPs for a single reward source.  We first define the concept of a cycle which occurs in continuous MDPs and derive the exact value function.

\begin{definition}
The \emph{cycle} of a state $s$, which denoted as $\mathcal{C}(s)$, is an ordered sequence of states: $s^{(1)}, s^{(2)}, \cdots, s^{(t)}$ where the states in this sequence satisfy the following condition:

There exists a sequence of action $a^{(1)}, a^{(2)}, \cdots, a^{(t+1)}$ such that:
\begin{gather*}
  T(s, a^{(1)}) = s^{(1)}	\\
  T(s, a^{(1)}, a^{(2)}) = s^{(2)}	\\
  \cdots	\\
  T(s, a^{(1)}, a^{(2)}, \cdots, a^{(t)}, a^{(t+1)}) = s
\end{gather*}

The length of the cycle, $d( \mathcal{C}(s) )$ is the number of actions in the sequence ($t+1$) that causes a return to $s$.
\end{definition}

Note that if a state $s$ has more than one cycle there always exists one cycle with finite distance.  If there exists an action $a \in \A$ such that $T(s, a) = s$, this is also a cycle with distance 1.  Note that a state $s$ may have no cycle.  Note also that a state $s$ can have more than one cycles and that the states in these cycles do not need to be distinct.  (Some states of a given cycle may be shared with other cycles.)

\begin{definition}
Suppose a state $s$ has $p$ cycles $\mathcal{C}^1(s), \cdots, \mathcal{C}^p(s)$, \textcolor{accepted}{where $p$ can be finite or infinite.} The \emph{minimum cycle} of state $s$, which denoted as $\mathcal{C}^*(s)$, is a cycle with minimum distance:\\
\begin{equation}
  \mathcal{C}^* = \{ \mathcal{C}^i | d(\mathcal{C}^i) \leq d(\mathcal{C}^j), \forall j \in \{ 1, \cdots, p \} \}
\end{equation}
\end{definition}

Note that a state $s$ can have no minimum cycle, if and only if the state $s$ has no cycles. And a state $s$ can also have more than one minimum cycle when there are more than one cycles having same minimum distance among all the cycles.

We denote the distance of the minimum cycle of state $s$ as $\phi(s)$. 

We now describe how to calculate the value function given this definition of a minimum cycle.  

\begin{theorem}
Every deterministic continuous \textcolor{accepted}{fully connected} MDP with a single reward source has a minimum cycle. 
\label{thm:min_cycle_exists}
\end{theorem}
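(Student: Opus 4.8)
The plan is to prove the stronger fact that in a fully connected MDP \emph{every} state admits at least one cycle; the existence of a \emph{minimum} cycle then follows immediately, since the length $d(\mathcal{C})$ of any cycle is a positive integer and the set of cycle lengths of a state with at least one cycle is a nonempty subset of $\mathbb{Z}_{>0}$, hence has a least element by well-ordering — and that least element is by definition $\phi(s)$, the distance of a minimum cycle $\mathcal{C}^*(s)$.

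First I would dispose of the degenerate case $|\mathcal{S}| = 1$: the single state $s$ is not a terminal state because the MDP is \emph{continuous}, so there is an action $a$ available at $s$, and the only possible next state is $s$ itself, giving $T(s,a) = s$, a cycle of length $1$. For $|\mathcal{S}| \ge 2$, fix an arbitrary state $s$ and pick any other state $s_k \ne s$. Because the MDP is fully connected, $\delta(s, s_k) < \infty$, and since $s_k \ne s$ we have $\delta(s,s_k) \ge 1$; in particular $s$ has at least one available action $a^{(1)}$, leading to $s^{(1)} = T(s, a^{(1)})$. If $s^{(1)} = s$, we already have a cycle of length $1$. Otherwise, apply full connectivity to the pair $(s^{(1)}, s)$: $\delta(s^{(1)}, s) = u < \infty$, so there is an action sequence $b^{(1)}, \dots, b^{(u)}$ with $T(s^{(1)}, b^{(1)}, \dots, b^{(u)}) = s$. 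Concatenating the single action $a^{(1)}$ with $b^{(1)}, \dots, b^{(u)}$ yields $T(s, a^{(1)}, b^{(1)}, \dots, b^{(u)}) = s$, which is a cycle through $s$ of length $u+1$. Hence $s$ has at least one cycle, and by the well-ordering remark above it has a minimum cycle. Since $s$ was arbitrary, the claim follows.

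I do not anticipate a serious obstacle: the only point needing a little care is justifying that $s$ has an outgoing action before invoking the ``return trip'' to $s$ — this is exactly where the fully connected hypothesis (for $|\mathcal{S}| \ge 2$) and the continuous hypothesis (for the trivial single-state case) do the work — and making sure the concatenation of two action sequences is written consistently with the composed-transition notation $T(s, a^{(1)}, a^{(2)}, \dots)$ introduced earlier. Note that, apart from ruling out a lone terminal state, the ``continuous'' assumption plays no essential role here; it becomes relevant only for the subsequent exact computation of the value function.
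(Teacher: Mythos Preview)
Your proposal is correct and follows essentially the same route as the paper: take one action out of the state, and if that action does not already self-loop, invoke full connectivity to obtain a finite action sequence returning to the starting state, yielding a cycle. The paper argues this only for the reward state $s_g$ and wraps it in some reward-based motivation, whereas you prove it for every state, explicitly invoke well-ordering to pass from ``some cycle'' to ``minimum cycle,'' and are more careful about justifying that an outgoing action exists; these are refinements rather than a different approach.
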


The proof is provided in Appendix 3.

\begin{theorem}
For the deterministic continuous \textcolor{accepted}{fully connected} MDP model defined above, suppose it has reward function as follows:
\begin{equation}
r(s) = \begin{cases}
          r_g>0 & \mbox{if } s=s_g \\
    	  0  & \mbox{otherwise}.
	    \end{cases}
\label{eq:reward_cont}
\end{equation}

where $s, s_g \in S$ is the state where reward $r_g$ is collected.

Then the value function of this MDP with discount factor $0 < \gamma <1$ has the form
\begin{equation}
\begin{split}
V(s) &= \gamma^{\delta(s,s_g)} \times \frac{r_g}{1 - \gamma^{\phi(s_g)}} \\
\label{eq:cont_cycle}
\end{split}
\end{equation}
\label{thm:val_function}
\end{theorem}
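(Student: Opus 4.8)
The plan is to prove the formula by establishing matching lower and upper bounds on $V(s)$, working directly with trajectories: since the MDP is deterministic, any policy from $s$ induces an infinite state sequence $s^{(0)}=s, s^{(1)}, s^{(2)}, \dots$, and because $r$ depends only on state its value is $\sum_{t\ge 0}\gamma^t r(s^{(t)}) = r_g \sum_{t\,:\,s^{(t)}=s_g}\gamma^t$. Thus $V(s)$ is the supremum of this quantity over all realizable trajectories. Throughout I invoke Theorem~\ref{thm:min_cycle_exists} so that $\phi(s_g)$ is finite and well defined, and full connectedness so that $\delta(s,s_g)$ is finite.

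For the lower bound I would exhibit an explicit trajectory attaining the claimed value. Starting from $s$, follow a shortest action sequence to $s_g$; this reaches $s_g$ for the first time at time exactly $\delta(s,s_g)$ and collects no reward before then. From $s_g$, repeatedly traverse a minimum cycle $\mathcal{C}^*(s_g)$. Since a minimum cycle cannot touch $s_g$ before it closes (otherwise a shorter cycle would exist), this trajectory is at $s_g$ precisely at the times $\delta(s,s_g), \delta(s,s_g)+\phi(s_g), \delta(s,s_g)+2\phi(s_g), \dots$. Summing the resulting geometric series gives value $\gamma^{\delta(s,s_g)} r_g \sum_{k\ge 0}\gamma^{k\phi(s_g)} = \gamma^{\delta(s,s_g)}\tfrac{r_g}{1-\gamma^{\phi(s_g)}}$, so $V(s)$ is at least this.

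For the upper bound, take an arbitrary trajectory from $s$ and let $t_1 < t_2 < \cdots$ be the (possibly finitely many, possibly zero) times at which it is at $s_g$. The initial segment up to time $t_1$ witnesses $t_1 \ge \delta(s,s_g)$ by the definition of distance, and each segment from $t_k$ to $t_{k+1}$ is a cycle of $s_g$, hence has length $t_{k+1}-t_k \ge \phi(s_g)$ by minimality of $\phi(s_g)$. By induction $t_k \ge \delta(s,s_g) + (k-1)\phi(s_g)$, so $\gamma^{t_k} \le \gamma^{\delta(s,s_g)+(k-1)\phi(s_g)}$ since $0<\gamma<1$, and the trajectory value $r_g\sum_k \gamma^{t_k}$ is bounded above by $\gamma^{\delta(s,s_g)}\tfrac{r_g}{1-\gamma^{\phi(s_g)}}$ (the empty case giving $0$, which is smaller since the bound is strictly positive). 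Taking the supremum gives $V(s) \le \gamma^{\delta(s,s_g)}\tfrac{r_g}{1-\gamma^{\phi(s_g)}}$, and together with the lower bound this proves the theorem.

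The main obstacle is the upper bound, and specifically the claim that consecutive visits to $s_g$ are separated by at least $\phi(s_g)$ steps: one must check that the trajectory segment between two consecutive visits genuinely matches the formal definition of a cycle of $s_g$ (it begins and ends at $s_g$ via a valid action sequence), so that the minimality defining $\phi(s_g)$ applies. An alternative route that sidesteps this bookkeeping is to verify that the proposed $V$ satisfies the Bellman optimality equation $V(s) = \max_a\big[r(s) + \gamma V(T(s,a))\big]$ — which reduces to the elementary facts $\min_a \delta(T(s,a),s_g) = \delta(s,s_g)-1$ for $s\ne s_g$ and $\min_a \delta(T(s_g,a),s_g) = \phi(s_g)-1$ — and then invoke uniqueness of the fixed point of the contraction Bellman operator noted in the Background.
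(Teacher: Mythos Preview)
Your proposal is correct and takes a genuinely different route from the paper. The paper first computes $V(s_g)$ by case analysis on the cycle length (self-loop, length-$2$ cycle, length-$(k{+}1)$ cycle), summing the resulting geometric series, and then extends to all states by an induction on distance: from a state one step away the value is $\gamma V(s_g)$, and generally $V(s^{(n+1)})=\gamma V(s^{(n)})$. Your argument instead sandwiches $V(s)$ between matching bounds obtained directly from trajectory visit times to $s_g$.

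The main difference in what each approach buys: the paper's inductive argument is intuitive but its optimality step is informal---it asserts that heading back to $s_g$ along a shortest route is best without explicitly comparing against all other action sequences. Your upper bound does exactly that comparison, via the clean inequalities $t_1\ge\delta(s,s_g)$ and $t_{k+1}-t_k\ge\phi(s_g)$, so the optimality is fully justified. Your alternative route through the Bellman fixed-point equation is in fact the rigorous version of what the paper's induction is implicitly using, and the two reductions you name, $\min_a\delta(T(s,a),s_g)=\delta(s,s_g)-1$ for $s\ne s_g$ and $\min_a\delta(T(s_g,a),s_g)=\phi(s_g)-1$, are precisely the missing lemmas that would make the paper's argument airtight.
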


The proof is provided in Appendix 3.

\subsection{Exact Solution for Multiple Reward Sources}

We move now to discuss how to find the value function when multiple reward sources are present.  First, we define multiple reward sources as having $N>0$ positive rewards $R = \{ r_1, ... r_N \}$, where we will refer to the number of rewards as $|R|$.  We introduce some terms to help us describe the algorithm.  Informally, we describe a state where the value function increases due to acquiring a reward as a \emph{peak}, and each reward will generate a peak.  

If we assume that there is a peak value $v_g$ at state $s_g$, then we will term the operation of calculating the whole value function from a peak as \emph{propagating reward} and will denote this for reward $r_g$ at state $s_g$ generally as:
\begin{equation}
    \mathcal{P}_g(s) = \gamma^{\delta(s,s_g)} \times v_g
    \label{eq:propagate}
\end{equation}
where $\delta(s,s_g)$ is the distance from $s$ to $s_g$.  Note that this simply corresponds to the discounted future reward from state $s$ with respect to reward $r_g$, but is a convenient notational shorthand.  

\begin{definition}
We use the term \emph{baseline peak}, $\mathcal{B}^i$, to describe the a single reward source $r_i$ located at state $s_i$ which is collected infinitely but has no other rewards in its minimum cycle.  When the context is clear or we are speaking generally of a baseline peak, we may drop the $i$ and denote the baseline peak as $\mathcal{B}$.  \textcolor{accepted}{The value at the baseline peak is:
\begin{equation*}
  \mathcal{B}^i = \frac{r_i}{1 - \gamma^{\phi(s_i)}}
\end{equation*}}

\textcolor{accepted}{
The value function for the baseline peak $\mathcal{B}^i$ is a mapping $\mathcal{P}: \mathcal{S} \rightarrow \mathbb{R}$:
\begin{equation}
  \mathcal{P}_{\mathcal{B}^i}(s) = \gamma^{\delta(s,s_i)} \times \frac{r_i}{1 - \gamma^{\phi(s_i)}}
\end{equation}}
\end{definition}

\begin{definition}
We use the term \emph{combined peak} to describe a primary reward source at state $s_p$ which is collected infinitely and has a secondary reward source at state $s_s$ within the primary state's minimum cycle.  We denote the value of the combined peak at $s_p$ as $\Gamma^{p,s}$ (or $\Gamma^p$ or even $\Gamma$ when the context is clear.)  The value function for a combined peak is 
\begin{equation}
\mathcal{P}_{\Gamma^{p,s}}(s) = \mathcal{P}_{\mathcal{B}^p}(s) + \mathcal{P}_{\mathcal{B}^s}(s)
\end{equation}
\end{definition}


\textcolor{accepted}{For rewards that are collected just once, we refer to the increase to the value function of collecting this reward as a \emph{delta peak}.  This represents a "bump" in the value function at the state where the reward is collected, which is propagated outward.}

\begin{definition}
A delta peak for a reward $r_i$ is calculated by adding the reward $r_i$ at state $s_i$ to some pre-existing value function $\mathcal{P}^j(s)$ formed by propagation. At $s_i$, the value of the delta peak is $\Delta^i = r_i + \mathcal{P}^j( s_i )$.  The value function for the delta peak is formed by propagation:
\begin{equation}
  \mathcal{P}_{\Delta^i}(s) = \gamma^{\delta(s,s_i)} \times \Delta^i
\end{equation}
\end{definition}

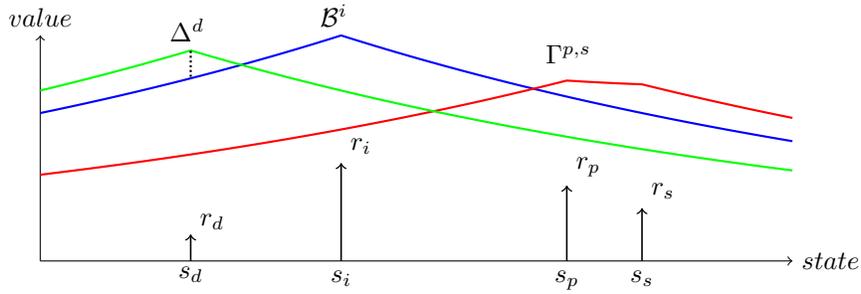
\begin{figure}[H]
\centering
\begin{tikzpicture}[decoration={markings, mark=at position 1.0 with {\arrow{>}};}]
      \tkzDefPoint(2,.05){A}
      \tkzDefPoint(2,.3){B}
      \tkzDrawLines[postaction={decorate}](A,B)
      \tkzLabelPoint[above right](B){$r_d$}
      \tkzLabelPoint[below](A){$s_d$}
      
      \tkzDefPoint(4,.00){A}
      \tkzDefPoint(4,1.3){B}
      \tkzDrawSegments[postaction={decorate}](A,B)
      \tkzLabelPoint[above right](B){$r_i$}
      \tkzLabelPoint[below](A){$s_i$}
      
      \tkzDefPoint(7,.00){A}
      \tkzDefPoint(7,1.0){B}
      \tkzDrawSegments[postaction={decorate}](A,B)
      \tkzLabelPoint[above right](B){$r_p$}
      \tkzLabelPoint[below](A){$s_p$}
      
      \tkzDefPoint(8,.00){A}
      \tkzDefPoint(8,0.7){B}
      \tkzDrawSegments[postaction={decorate}](A,B)
      \tkzLabelPoint[above right](B){$r_s$}
      \tkzLabelPoint[below](A){$s_s$}

      \draw[->] (0 ,0) -- (10,0) node[right] {$state$};
      \draw[->] (0 ,0) -- (0,3) node[above] {$value$};
      \draw[thick,domain=0:10,samples=1000,variable=\x,blue] plot ({\x},{3*0.9^(abs(\x-4))});
      \draw[thick,domain=0:10,samples=1000,variable=\x,red] plot ({\x},{1.5*0.9^(abs(\x-7))+1*0.9^(abs(\x-8))});
      \draw[thick,domain=0:10,samples=1000,variable=\x,green] plot ({\x},{2.8*0.9^(abs(\x-2))});
      \draw[thick,densely dotted,domain=2.45:2.8,smooth,variable=\y,black]  plot (2,{\y});
      \tkzLabelPoint[above right](3.6,3  ){$\mathcal{B}^i$}
      \tkzLabelPoint[above right](6.6,2.5){$\Gamma^{p,s}$}
      \tkzLabelPoint[above right](1.6,2.8){$\Delta^d$}
      
\end{tikzpicture}
\caption{Illustration of baseline (blue), combined baseline (red), and delta baseline (green)}
\end{figure}

\subsubsection{Algorithm}

The algorithm, which we have named \algo, is designed to maintain a sorted list of all valid possible peaks at any time.  Each iteration, it selects the maximum peak from the list and this peak is considered \emph{processed}.  The processed peak has at least one affected reward (combined peaks have more than one); all peaks that are composed from the affected rewards are removed from the list.  

Baseline peaks and combined peaks rely only on the value of the reward, and are therefore pre-calculated and added to the reward list before the first iteration.  Delta rewards however depend on the value function at each iteration and are therefore calculated at the beginning of each iteration.  Processing continues until the list of possible peaks is empty.

\begin{algorithm}[H]
\caption{\algo}\label{algorithm}
\begin{algorithmic}[1]
\Procedure{\algo}{$\textit{rewardSources}$}
\State $\textit{valueFunction} \gets \text{preallocate array of zeros} $ 
\State $\textit{processedPeaks} \gets \text{empty list}$
\State $\textit{sortedPeaks} \gets \textit{PrecomputePeaks( rewardSources )}$
\While {$\textit{sortedPeaks} \text{ is not empty}$}
	\State $\textit{deltaPeaks} \gets \textit{ComputeDeltas( valueFunction )}$
	\State $\textit{sortedPeaks} \gets \textit{PruneInvalidPeaks()}$
	\State $\textit{maxPeak} \gets \textit{max( [ sortedPeaks, deltaPeaks ] )}$
	\State $\textit{sortedPeaks} \gets \textit{RemoveAffectedPeaks( maxPeak )}$
	\State $\textit{valueFunction} \gets \textit{UpdateValueFunction()}$
\EndWhile
\Return $\textit{valueFunction}$
\EndProcedure
\end{algorithmic}
\end{algorithm}

Line 2 initializes memory to hold the value function.  Line 3 initializes an empty list to track which peaks have been processed by the algorithm.  Line 4 pre-computes baseline peaks and combined peaks based off a list of reward sources and stores them in the form of a sorted list, sorted by value of each peak.  Lines 5-10 continue until we have exhausted the potential peaks and each iteration of the loop whittles away at the list of possible peaks.  Line 6 computes delta peaks for any remaining reward sources utilizing the value function that has been computed so far.  Line 7 removes any peaks that have become invalid.  Line 8 selects the peak with maximum value.  Line 9 removes any other potential peaks in the list that are affected by selecting the peak with maximum value.  (For example, a combined peak with states 3 and 4 are selected.  The baselines for states 3 and 4 would then be removed.)  Line 10 then updates the value function based off the newly selected peak with maximum value.  Complete pseudo code is provided in Appendix 2.

In addition to the proof provided in the appendix, the algorithm was additionally verified by generating test scenarios where randomly sized grid worlds with randomly generated rewards.  The number of rewards varied between $1$ and $|S|$.  The MDP was solved with value iteration, and then the result was used to verify the operation of our algorithm.

Performance of the algorithm is $O( |R|^2 \times |A|^2  \times |S|)$, where $|R|$ is the number of reward sources, $|A|$ is the number of actions, and $|S|$ is the number of states.  Memory complexity for the algorithm is $O( |S| + |R| \times |A|)$.  

\section{Experiments}

\begin{figure}[H]
\centering
\begin{subfigure}{.33\textwidth}
  \centering
  \includegraphics[width=1\linewidth]{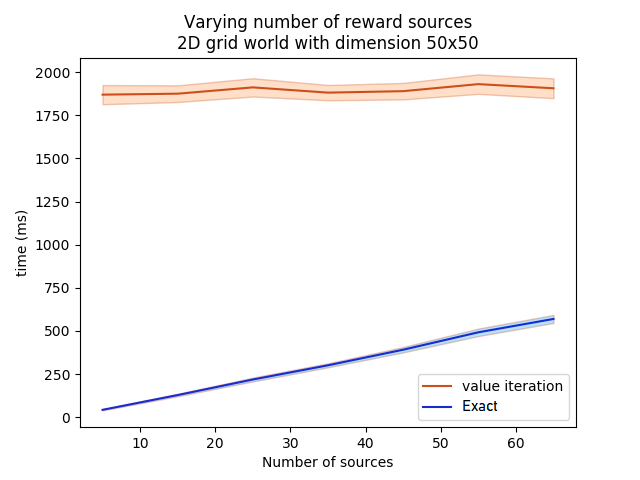}
  \caption{Varying number of reward sources}
  \label{fig:rewards}
\end{subfigure}%
\begin{subfigure}{.33\textwidth}
  \centering
  \includegraphics[width=1\linewidth]{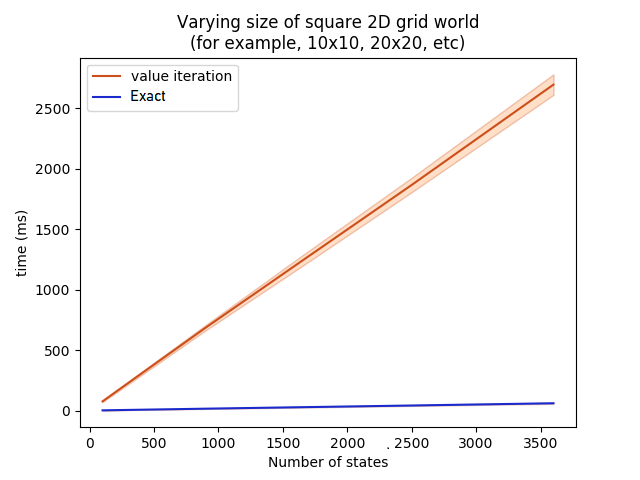}
  \caption{Varying number of states}
  \label{fig:states}
\end{subfigure}
\begin{subfigure}{.33\textwidth}
  \centering
  \includegraphics[width=1\linewidth]{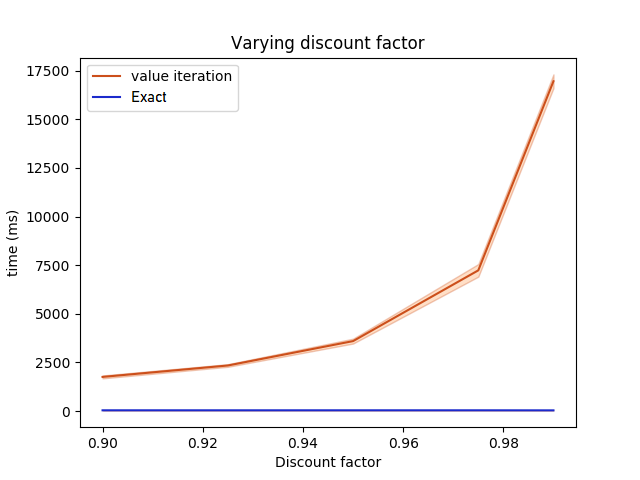}
  \caption{Varying discount factor}
  \label{fig:discount}
\end{subfigure}%
\caption{Experimental results showing the improved performance of the algorithm as compared to value iteration when varying the number of reward sources, varying the number of states, and varying the discount factor.}
\label{fig:test}
\end{figure}

Figure \ref{fig:rewards} shows the effects of varying the number of reward sources on the performance of the algorithm.  For this result, a 50x50 grid world was used.  The $x$-axis shows the number of reward sources used for a test configuration and the $y$-axis shows the length of time required to solve the MDP.  For each test configuration, 1000 randomly generated configurations were created for the number of reward sources specified in the test configuration with reward values ranging from 1 to 10.  For each generated configuration, value iteration and our proposed algorithm (named \algo) were run to obtain performance measurements.  As an additional check, the exact solution calculated by this algorithm was compared to the value iteration result to ensure they produced the same result (within a tolerance due to value iteration approximating the exact solution with the use of a bellman residual as a terminating condition.)  In the plot, the bold line is the average and the colored envelope shows the standard deviation for each test configuration.

The figure shows that as the number of reward sources increases, value iteration remains invariant of the number of reward sources.  For the algorithm proposed in this paper, for small numbers of reward sources the algorithm clearly outperforms value iteration.  As the number of reward sources increases, however, we expect an intersection point will occur and value iteration will begin to perform better.

Figure \ref{fig:states}  shows the effects of varying the size of the state space on the performance of the algorithm.  For this a fixed number of reward sources (5) were used, and only the size of the state space was varied (by making the grid world larger).  The $x$-axis shows the number of states in the grid world (e.g., $10\times10=100, 50\times50=2500$) and the $y$-axis shows length of time required to solve the MDP.  For each grid world size, 1000 randomly generated reward configurations with the fixed number of reward sources were generated.  The results show that value iteration quickly increases in execution time when the state space increases whereas the algorithm proposed in this paper increases a much slower rate.

Figure \ref{fig:discount} shows the effects of varying the discount factor on the performance of the algorithm.  For this test, a fixed number of reward sources (5) and state space size (50x50) were used, and only the discount factor was varied.  The $x$-axis shows the discount factor and the $y$-axis shows the length of time required to solve the MDP.  For each discount factor, 1000 randomly generated reward configurations with the fixed discount factor were generated.  The results show that value iteration increases apparently exponentially with the discount factor, whereas the algorithm proposed in this paper is invariant to the discount factor.  This follows from the exact calculation of the value based off the distance, where the discount factor is simply a constant that is used in the calculation.

All tests were performed on a high-end "gaming class" Alienware  laptop with a quad-core Intel i7 running at 4.4 GHz with 32GB RAM without using any GPU hardware acceleration (i.e., CPU only).  All code is single threaded, python only and no special optimization libraries other than numpy were used (for example, the python numba library was not used to accelerate numpy calculations.)  Both value iteration and the proposed algorithm use numpy.  The results presented here are meant to most fairly present the performance differences between the algorithms, thus further optimizations should yield improved performance beyond what is presented here.

\section{Conclusion}

In this paper we have presented a novel approach to solving deterministic continuous MDPs exactly which we believe is the first example of this technique.  This new algorithm's computational speed greatly exceeds that of value iteration for sparse reward sources and, furthermore, is invariant to the discount factor.  The complexity of the algorithm is $O( |R|^2 \times |A|^2  \times |S|)$, where $|R|$ is the number of reward sources, $|A|$ is the number of actions, and $|S|$ is the number of states.  Memory complexity for the algorithm is $O( |S| + |R| \times |A|)$.

For environments where the model is known and the dynamics are deterministic, this method will have immediate applicability.  It may also be of use in certain robotics and unmanned vehicle applications where the state uncertainty is negligible and real-time constraints must be met.  Additionally this method allows the position and magnitude of rewards in the state space to change arbitrarily between time steps without affecting performance.

Future work will consist of handling negative rewards and extending to more general classes of MDPs such as stochastic MDPs.  We also plan to test performance against other classes of algorithms and show how the algorithm can be applied in various problem domains.

\bibliographystyle{unsrt}
\bibliography{refs}

\newpage
\section{Appendix 1: Example Transition Graph}

A sample transition matrix for a \textcolor{accepted}{4-state, 2-action} deterministic problem might be: 

\begin{center}
    \begin{tabular}{ | c | c | c | c | c | c |}
    \hline
          & action     & $s_0$ &   $s_1$ &   $s_2$ &  $s_3$ \\ \hline
    $s_0$ & $a_0$      & 0.0   &     1.0 &     0.0 &    0.0 \\ \hline
    $s_1$ & $a_0$      & 0.0   &     0.0 &     1.0 &    0.0 \\ \hline
    $s_2$ & $a_0$      & 0.0   &     0.0 &     0.0 &    1.0 \\ \hline 
    $s_3$ & $a_0$      & 1.0   &     0.0 &     0.0 &    0.0 \\ \hline 
    $s_0$ & $a_1$      & 0.0   &     0.0 &     0.0 &    1.0 \\ \hline
    $s_1$ & $a_1$      & 1.0   &     0.0 &     0.0 &    0.0 \\ \hline
    $s_2$ & $a_1$      & 0.0   &     1.0 &     0.0 &    0.0 \\ \hline 
    $s_3$ & $a_1$      & 0.0   &     0.0 &     1.0 &    0.0 \\ \hline 
    \end{tabular}
\end{center}

From this we can infer that action $a_0$ causes the state number to increment, and $a_1$ causes the state number to decrement.  It is deterministic because the specified action always works 100\% of the time.  For a given state and action (a row of the table), the probabilities sum to 1.0.
    
The corresponding \textcolor{accepted}{MDP states} graph would be:


\begin{figure}[h]
	\centering
	\fbox{
		\includegraphics[width=3in]{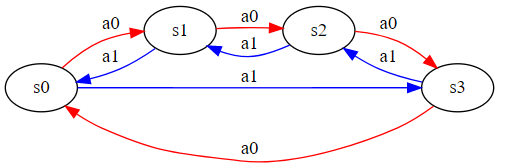}
	}
	\caption{MDP states graph for a small, deterministic MDP}
\end{figure}

\newpage
\section{Appendix 2: Detailed Pseudocode}

\begin{algorithm}[H]
\caption{\algo}\label{algorithm}
\begin{algorithmic}[1]
\Procedure{\algo}{$\textit{rewardSources}$}
\State $\textit{valueFunction} \gets \text{preallocate array of zeros} $ 
\State $\textit{processedPeaks} \gets \text{empty list}$
\State $\textit{sortedPeaks} \gets \textit{PrecomputePeaks( rewardSources )}$
\While {$\textit{sortedPeaks} \text{ is not empty}$}
	\State $\textit{deltaPeaks} \gets \textit{ComputeDeltas( valueFunction )}$
	\State $\textit{sortedPeaks} \gets \textit{PruneInvalidPeaks()}$
	\State $\textit{maxPeak} \gets \textit{max( [ sortedPeaks, deltaPeaks ] )}$
	\State $\textit{sortedPeaks} \gets \textit{RemoveAffectedPeaks( maxPeak )}$
	\State $\textit{valueFunction} \gets \textit{UpdateValueFunction()}$
\EndWhile
\Return $\textit{valueFunction}$
\EndProcedure
\end{algorithmic}
\end{algorithm}

Line 2 initializes memory to hold the value function.  Line 3 initializes an empty list to track which peaks have been processed by the algorithm.  Line 4 pre-computes baseline peaks and combined peaks based off a list of reward sources and stores them in the form of a sorted list, sorted by value of each peak.  Lines 5-10 continue until we have exhausted the potential peaks and each iteration of the loop whittles away at the list of possible peaks.  Line 6 computes delta peaks for any remaining reward sources utilizing the value function that has been computed so far.  Line 7 removes any peaks that have become invalid.  Line 8 selects the peak with maximum value.  Line 9 removes any other potential peaks in the list that are affected by selecting the peak with maximum value.  (For example, a combined peak with states 3 and 4 are selected.  The baselines for states 3 and 4 would then be removed.)  Line 10 then updates the value function based off the newly selected peak with maximum value.

\begin{algorithm}[H]
\begin{algorithmic}[1]
\Procedure{PrecomputePeaks}{rewardSources}
\State $\textit{list} \gets \text{empty SortedList} $ 

\ForAll{rewardSources}
	\State $\textit{list.add}$( baseline peak for reward source )
\EndFor

\ForAll{rewardSources}
	\State $nbr \gets \text{find neighboring state with highest reward}$
	\If {nbr is not empty}
	   \State $\textit{list.add}$( cycle peak for reward source )
    \EndIf
\EndFor
\Return $\textit{list}$
\EndProcedure
\end{algorithmic}
\end{algorithm}

Line 2 initializes a sorted list that is sorted by value of the peaks.  In Lines 3-4, a baseline peak is computed for each reward sourcea.  In lines 5-8, if any reward sources are next to each other, their combined peaks are computed.

$\textit{PrecomputePeaks()}$ is a $O(|R| \times |A|)$ algorithm that is done one time at the beginning of the algorithm and yields a list with worst case length of $O(|R| \times |A|)$ entries (but only if the reward sources are all adjacent to each other).  

\begin{algorithm}[H]
\begin{algorithmic}[1]
\Procedure{ComputeDeltas}{}
\State $\textit{list} \gets \text{empty SortedList} $ 

\ForAll{reward sources}
	\State $\text{compute delta of reward and value function}$
	\State $nbr \gets \text{find neighboring state with highest value}$
    
	\State $\textit{list.add}( max( delta peak, neighbor value ) )$
\EndFor
\EndProcedure
\end{algorithmic}
\end{algorithm}

Line 2 initializes a sorted list that is sorted by value of the peaks.  Lines 3-6 compute delta peak for any reward sources that remain.  Line 5-6 properly sort the delta with respect to neighboring states.

$\textit{ComputeDeltas( valueFunction )}$ is a $O(|R|\times |A| )$ algorithm that is done for each pass of the loop.

\begin{algorithm}[H]
\begin{algorithmic}[1]
\Procedure{PruneInvalidPeaks}{}
\ForAll{remaining peaks}
	\State $nbr \gets \text{find neighboring state with highest value}$
	\If {$nbr > peak$}
	    \State $\textit{list.remove( peak )}$
    \EndIf
\EndFor
\EndProcedure
\end{algorithmic}
\end{algorithm}

Lines 2-5 remove any peaks that have become invalid.

$\textit{PruneInvalidPeaks()}$ is a $O(|R| \times |A| )$ algorithm that is done for each pass of the loop, but this also shrinks by $O(|A|)$ entries each pass.

\begin{algorithm}[H]
\begin{algorithmic}[1]
\Procedure{RemoveAffectedPeaks}{list, state}
\ForAll{remaining peaks}
	\If {peak is affected by state}
	    \State $\textit{list.remove( peak )}$
    \EndIf
\EndFor
\EndProcedure
\end{algorithmic}
\end{algorithm}

Lines 2-4 remove any peaks that have been eliminated by the choice of the peak with maximum value.

$\textit{RemoveAffectedPeaks}$ operates over the $O(|R| \times |A|)$ $\textit{sortedPeaks}$ list, but this also shrinks by $O(|A|)$ entries each pass.

\begin{algorithm}[H]
\begin{algorithmic}[1]
\Procedure{UpdateValueFunction}{value\_function, peak}
	\State $\textit{interim} \gets \textit{Propagate(peak)}$
	\State $\textit{valueFunction} \gets \textit{element-wise-max( interim, valueFunction)}$
\EndProcedure
\end{algorithmic}
\end{algorithm}

Line 2 propagates the peak outward to compute an interim value function for the reward source selected during this iteration.  Line 3 then performs an element-wise max operation over the value function computed during the previous iteration and the interim value function, resulting in the value function for this iteration.

$\textit{UpdateValueFunction}$ is a $O(|S|)$ operation due to the call to $\textit{Propagate}$.

\begin{algorithm}[H]
\begin{algorithmic}[1]
\Procedure{Propagate}{peak}
	\State $\textit{valueFunc} \gets \text{allocate empty array of zeros}$
    \ForAll{states}
       \State $\textit{valueFunct[state]} \gets \textit{peak} \times \textit{discount}^\textit{ConnDist} $
    \EndFor
    \Return $\textit{valueFunc}$
\EndProcedure
\end{algorithmic}
\end{algorithm}

Line 2 creates a value function with all zeros.  Lines 3-4 compute the value function based off the peak's value, the distance through the transition graph, and the discount factor.

$\textit{Propagate()}$ is a $O(|S|)$ operation.  

\begin{algorithm}[H]
\begin{algorithmic}[1]
\Procedure{ConnDist}{}
    \State $\textit{dist} \gets \text{manhattan distance between start and end state}$\\
    \Return $\textit{dist}$
\EndProcedure
\end{algorithmic}
\end{algorithm}

Line 2 calculates the distance through the transition graph.  Note that because our 2D grid world has a known structure, we can take advantage of this knowledge to perform our distance calculation in constant time rather than having to perform a shortest path search through the graph.  This algorithm will receive an important performance boost whenever this is possible.  (The overhead of performing a graph search through the transition graph for a general MDP may outweigh the benefits of this algorithm.  That is an open question for future work.)

$\textit{ConnDist()}$ for this 2D grid world is a $O(1)$ constant operation.  
\subsubsection{Time Complexity}

Because this algorithm still requires the full value function to be computed, this drives an underlying $O(|S|)$ time complexity for creating the data structure and updating the value function at the end of each pass.

In general the time complexity of this algorithm is $O( |R|^2 \times |A|^2  \times |S|)$, where $|R|$ is the number of reward sources, $|A|$ is the number of actions, and $|S|$ is the number of states.

For environments where the connected distance is not easily determined (arbitrary transition graph), then the complexity to determine the distance between states must be taken into consideration.  However, it is assumed that this can be precomputed offline because $T$ is assumed to be stationary. 

For environments like the 2D grid world where the structure of the space is known, determining the connected distance between states is a $O(1)$ calculation.

\subsubsection{Memory Complexity}

Memory complexity for the algorithm is $O( |S| + |R| \times |A|)$

\newpage
\section{Appendix 3: Single Source Proof}

\setcounter{theorem}{0}

\subsection{Proof for single sources}

\begin{definition}
Single reward source:

We define a MDP with a single reward source as having a state $s_g$ where a positive real reward $r_g > 0$ is obtained.  That is, with a reward function of:

\begin{equation}
R(s, a) = \begin{cases}
                    r_g>0 & \mbox{if } s=s_g \\
    				0  & \mbox{otherwise}.
			  \end{cases}
              \label{eq:reward_cont}
\end{equation}
\end{definition}

\begin{theorem}
Every deterministic continuous fully connected MDP with a single reward source has a minimum cycle.
\label{thm:min_cycle_exists}
\end{theorem}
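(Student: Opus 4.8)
The plan is to show that the reward state $s_g$ (indeed, every state) admits at least one cycle, and then invoke the well-ordering of the positive integers to conclude that a cycle of minimum length — a minimum cycle — exists. So the proof reduces to establishing the existence of \emph{some} cycle.

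First I would fix an arbitrary state $s \in \mathcal{S}$ and use the \emph{continuous} hypothesis to guarantee that there is at least one admissible action at $s$: since the process never terminates and has no absorbing goal state, $T(s,a)$ is defined for some $a \in \A$. Set $a^{(1)} = a$ and $s_1 = T(s, a^{(1)})$. Next I would invoke \emph{full connectivity}, which gives $\delta(s_1, s) < \infty$; writing $t = \delta(s_1, s)$, the definition of distance supplies a finite action sequence $a^{(2)}, a^{(3)}, \ldots, a^{(t+1)}$ with $T(s_1, a^{(2)}, \ldots, a^{(t+1)}) = s$. (If the chosen action happens to be a self-loop, $s_1 = s$, $t = 0$, and the construction below degenerates to the length-$1$ cycle already noted in the text.)

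Concatenating, the action sequence $a^{(1)}, a^{(2)}, \ldots, a^{(t+1)}$ drives the agent from $s$ through intermediate states $s^{(1)} = s_1, s^{(2)}, \ldots, s^{(t)}$ and back to $s = T(s, a^{(1)}, a^{(2)}, \ldots, a^{(t+1)})$; this is precisely a cycle $\mathcal{C}(s)$ in the sense of the cycle definition, of length $d(\mathcal{C}(s)) = t+1 \geq 1$. Hence the set $\{\, d(\mathcal{C}) : \mathcal{C} \text{ is a cycle of } s \,\}$ is a nonempty subset of $\mathbb{Z}_{>0}$, so by the well-ordering principle it has a least element; any cycle attaining this length is a minimum cycle $\mathcal{C}^*(s)$, and $\phi(s)$ is well-defined and finite. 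Taking $s = s_g$ yields the statement.

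I do not expect a genuine obstacle here — the result falls out of the definitions almost immediately. The only point that requires care is justifying that an action is always available (otherwise a "dead end" state would have no cycle), and this is where the \emph{continuous} hypothesis, rather than full connectivity, does the work. Finiteness of the state set is used only implicitly, through the finiteness of $\delta(s_1, s)$, which is already guaranteed by the fully connected assumption.
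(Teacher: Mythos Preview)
Your proposal is correct and follows essentially the same route as the paper: take one action out of $s_g$, invoke full connectivity to obtain a finite return path, and conclude that a cycle (hence a shortest cycle) exists. The paper dresses this up with a case split on whether self-transitions are allowed and on whether $\delta(s',s_g)=1$ or $\delta(s',s_g)=k>1$, and it prefaces the argument with a remark on why shorter cycles yield more reward; your version collapses the redundant cases and makes the well-ordering step explicit, but the underlying idea is identical.
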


\begin{proof}
Assume that we reach the goal state $s_g$ whereupon we obtain reward $r_g$.  We must then choose an action $a \in \A$ which will select our next state.  We know from the definition of the reward function in Equation \ref{eq:reward_cont} that immediate reward is 0 in all states other than $s_g$; therefore, the only way to accumulate any new reward is to take a set of actions that result in a return to state $s_g$, which we termed a cycle which we can denote here as $D_c$.  We observe that since reward can only be collected at $s_g$, that the reward per cycle that is collected is $R_c = \gamma ^ {D_c} \times r_g$.  We observe that $R_c$ grows as $D_c$ decreases, with the max occurring at $D_{c_{max}} = 1$.  Thus, we prove that a cycle must exist for a single reward source, and that cycles with shorter length are preferred.

We must consider two types of transition functions, $T$:
\begin{enumerate}
\item[Case 1:]  Those that allow self transitions ($s_g \rightarrow s_g$ takes one action).

For MDPs which have a transition matrix $T$ that allow for an action to stay in the same state, it is possible for our action $a$ above to transition from the assumed start state of $s_g$ to a next state of $s' = s_g$.  (This transition is not possible in the 2D grid world we use for illustration, but is possible for a general deterministic continuous MDP, so we include this case for consideration.)  In this case, we say that the minimum cycle distance $\phi(s_g) = 1$ because it takes one action to go from $s_g$ to itself.

\item[Case 2:]  Those that do not allow self transitions ($s_g \rightarrow s_g$ cannot be accomplished in one action, but instead leads to some next state $s'$ which is distinct from $s_g$).

From $s'$ we can obtain the distance back to the goal state $s_g$ with $\delta(s', s_g)$.  We now consider the following possible cases of $\delta(s', s_g)$, which is 1, or a finite $k$.  Note that our assumption of a fully connected MDP by definition means that all states are connected, meaning for all $s'$, $\delta(s', s_g) < \infty$.

\begin{enumerate}
\item[Case 2.a:] $\delta(s', s_g) = 1$:

For $\delta(s', s_g) = 1$, we can then conclude that to return to the goal state $s_g$, we would simply take action $a^* \in \A | f(s', a^*) = s_g$, thus establishing a minimum cycle with distance $\phi(s_g) = 2$ from $s_g$ back to itself.  

\item[Case 2.b:] $\delta(s', s_g) = k$, where $1 < k < \infty$:

For $\delta(s', s_g) = k$ where $1 < k < \infty$, we can similarly conclude that to return to the goal state $s_g$, we would simply take a sequence of actions $a^{(i)} \in \A | f(s', a^{(1)}, a^{(2)}, ... a^{(k)}) = s_g$, thus establishing a minimum cycle distance $\phi(s_g) = k + 1$ from $s_g$ back to itself.  
\end{enumerate}
\end{enumerate}

Thus we have established that for a continuous deterministic fully connected MDP, a minimum cycle must exist.
\end{proof}

\begin{theorem}
For a deterministic continuous fully connected MDP with a single reward source $r_g$ at state $s_g$, the value at $s_g$ is equal to:
\label{thm:single_source_peak_value}
\end{theorem}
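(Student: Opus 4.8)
The plan is to prove $V(s_g) = \frac{r_g}{1-\gamma^{\phi(s_g)}}$ by establishing matching lower and upper bounds. Theorem~\ref{thm:min_cycle_exists} guarantees that a minimum cycle exists, so $\phi(s_g)$ is a well-defined positive integer; since $0<\gamma<1$, we have $\gamma^{\phi(s_g)}\in(0,1)$, so the right-hand side is finite and the geometric series used below converges. Because the MDP is deterministic, $V(s_g)$ is the supremum over all action sequences starting at $s_g$ of the discounted sum $\sum_{t\ge 0}\gamma^t r(s^{(t)})$, and because $r$ is supported only on $s_g$, this sum equals $r_g\sum_i \gamma^{t_i}$, where $t_0 = 0 < t_1 < t_2 < \cdots$ are exactly the time steps at which the trajectory sits at $s_g$.

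First I would prove the lower bound by exhibiting the stationary policy that, from $s_g$, repeatedly traverses a minimum cycle $\mathcal{C}^*(s_g)$. A short argument shows this cycle visits pairwise distinct states and does not return to $s_g$ before it closes: otherwise, splicing out the repeated sub-loop would produce a strictly shorter cycle of $s_g$, contradicting minimality of $\phi(s_g)$. Hence at each state on the cycle the ``continue the cycle'' action is well defined, the induced trajectory is at $s_g$ precisely at times $0,\phi(s_g),2\phi(s_g),\dots$, and its value is $r_g\sum_{k=0}^{\infty}\gamma^{k\phi(s_g)} = \frac{r_g}{1-\gamma^{\phi(s_g)}}$, so $V(s_g)\ge \frac{r_g}{1-\gamma^{\phi(s_g)}}$.

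Next I would prove the upper bound. Take an arbitrary trajectory from $s_g$ with visit times $t_0 = 0 < t_1 < t_2 < \cdots$ (possibly finitely many). Between consecutive visits $t_i$ and $t_{i+1}$, the agent leaves $s_g$ and returns, so it has traversed a cycle of $s_g$, whose length is at least $\phi(s_g)$ by definition of the minimum cycle; therefore $t_{i+1}-t_i \ge \phi(s_g)$, and by induction $t_i \ge i\,\phi(s_g)$. Since $0<\gamma<1$ this gives $\gamma^{t_i}\le \gamma^{i\phi(s_g)}$, hence the discounted reward of the trajectory is $r_g\sum_i \gamma^{t_i}\le r_g\sum_{i=0}^{\infty}\gamma^{i\phi(s_g)} = \frac{r_g}{1-\gamma^{\phi(s_g)}}$. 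Taking the supremum over all trajectories yields $V(s_g)\le \frac{r_g}{1-\gamma^{\phi(s_g)}}$, and combined with the lower bound this gives the stated value.

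I expect the only delicate points to be the two consequences of minimality of $\phi(s_g)$ --- that the minimum cycle visits distinct states (so the lower-bound policy is a well-defined stationary policy) and that every return to $s_g$ costs at least $\phi(s_g)$ steps (the crux of the upper bound) --- together with the bookkeeping that reward accrues only at $s_g$, so that the value really is the cycle-indexed geometric sum. Everything else reduces to summing a geometric series.
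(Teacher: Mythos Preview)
Your proof is correct and more rigorous than the paper's, though structured differently. The paper proceeds by a case split on $\phi(s_g)$ (self-loop giving $\phi=1$; a neighbor at distance $1$ giving $\phi=2$; a neighbor at distance $k$ giving $\phi=k+1$) and in each case simply writes out the geometric series $r_g + \gamma^{\phi} r_g + \gamma^{2\phi} r_g + \cdots$ obtained by repeatedly traversing the minimum cycle, with only a one-step greedy remark (``taking an action $a^-$ will result in no reward, thus action $a^*$ is optimal'') standing in for an optimality argument. You instead give a uniform lower/upper bound: the lower bound is the same minimum-cycle trajectory, but your upper bound --- that successive visit times to $s_g$ differ by at least $\phi(s_g)$, hence $t_i\ge i\,\phi(s_g)$ and the discounted sum is dominated termwise by the geometric series --- supplies exactly the optimality content the paper leaves implicit. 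What you gain is an argument that works for all $\phi(s_g)$ at once and actually bounds \emph{every} action sequence; what the paper's case analysis buys is a more concrete picture of where the formula comes from. Your care about distinctness of states on the minimum cycle is not strictly needed for the lower bound (an action sequence suffices; a stationary policy is not required), but it is correct and does no harm.
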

\begin{equation}
\begin{split}
  V(s_g) &= \frac{r_g}{1 - \gamma^{\phi(s_g)}},
\end{split}
\end{equation}

where $\phi(s_g)$ is the minimum cycle distance for the MDP.

\begin{proof}
From Theorem \ref{thm:single_source_peak_value}, we know that a minimum cycle must exist for a deterministic continuous fully connected MDP.  To determine the value at $s_g$, we again consider taking an action $a \in \A$ from $s_g$ which leads to state $s'$, where the distance from $s'$ back to $s_g$ is 0, 1, or a finite $k$.  Note again that our assumption of a fully connected MDP by definition means that all states are connected, meaning for all $s'$, $\delta(s', s_g) < \infty$.

\begin{enumerate}
\item[Case 1:] $\delta(s', s_g) = 0$:

For MDPs which have a transition matrix $T$ that allow for an action to stay in the same state, it is possible for our action $a$ above to transition from the assumed start state of $s_g$ to a next state of $s' = s_g$.  (This transition is not possible in the 2D grid world we use for illustration, but is possible for a general deterministic continuous MDP, so we again include this case for consideration.)

Starting from $s_g$ and taking action $a^* \in \A^*$ as defined in case 1 of Theorem \ref{thm:single_source_peak_value}, we obtain an immediate reward of $r_g$ and find that our next state $s' = s_g$.  We then again take action $a^*$ and receive immediate reward $r_g$, making the cumulative reward $R = r_g + \gamma \times r_g$.  As we continue to take take action $a^*$, we find that in the limit the cumulative reward is $R = r_g + \gamma \times r_g + \gamma^2 \times r_g + \gamma^3 \times r_g ...$.  Note that as $0 < \gamma < 1.0$, this is a convergent geometric series with a limit of $\frac{r_g}{1-\gamma}$.  As in Theorem \ref{thm:single_source_peak_value} we have shown that the minimum cycle distance for this case $\phi(s_g) = 1$, the value at $s_g$ can be expressed equivalently as:

\begin{equation}
\begin{split}
  V(s_g) &= \frac{r_g}{1 - \gamma^{\phi(s_g)}}
\end{split}
\end{equation}

\item[Case 2:] $\delta(s', s_g) = 1$:

For $\delta(s', s_g) = 1$, we know by definition that at least one action $a^* \in \A | f(s', a^*) = s_g$ exists that will lead back to $s_g$, and we define all other actions $a^- = \A \setminus a^*$.  We know from our reward function that the only state in which reward is non-zero is $s_g$, thus taking an action $a^*$ will result in reward $r_g$ and taking an action $a^-$ will result in no reward, thus action $a^*$ is optimal.  We may also concluded that taking $a^*$ will result in a minimum cycle distance of $\phi(s_g) = 2$, yielding total reward of $R = r_g + \gamma \times \gamma \times r_g = r_g + \gamma^2 r_g$.  If we repeat this procedure, we then obtain reward two steps later yielding total reward of $r_g + \gamma^2 r_g + \gamma^4 r_g$.  In the limit, the cumulative reward (in other words, the value) is a geometric series:  

\begin{equation}
\begin{split}
  V(s_g) &= r_g + \gamma^2 r_g + \gamma^4 r_g + ... \\
       &= \frac{r_g}{1 - \gamma^{2}} \\
       &= \frac{r_g}{1 - \gamma^{\phi(s_g)}}
\end{split}
\end{equation} 

\item[Case 3:] $\delta(s', s_g) = k$, where $1 < k < \infty$:

For $\delta(s', s_g) = k$ where $1 < k < \infty$, we can similarly conclude that to return to the goal state $s_g$, we would simply take a sequence of actions $a^{(i)} \in \A | f(s', a^{(1)}, a^{(2)}, ... a^{(k)}) = s_g$, yielding total reward of $r_g + \gamma^{k+1} r_g$ and a minimum cycle distance $\phi(s_g) = k + 1$.  If we repeat this procedure, we then obtain reward $k + 1$ steps later yielding total reward of $r_g + \gamma^{k+1} r_g + \gamma^{2(k+1)} r_g$, and so on.  In the limit, the reward is:  

\begin{equation}
\begin{split}
  V(s_g) &= r_g + \gamma^{k+1} r_g + \gamma^{2(k+1)} r_g + \gamma^{4(k+1)} r_g + ... \\
       &= \frac{r_g}{1 - \gamma^{k+1}} \\
       &= \frac{r_g}{1 - \gamma^{\phi(s_g)}}
\end{split}
\label{eq:v_sg_proof}
\end{equation}

\end{enumerate}

Thus we have established the value at the state $s_g$ where reward $r_g$ is collected.
\end{proof}

\begin{theorem}
The value function for a deterministic continuous fully connected MDP with a single reward source with discount factor $0 < \gamma <1$ has the form: 

\begin{equation}
\begin{split}
V(s) &= \gamma^{\delta(s,s_g)} \times V(s_g)
\label{eq:cont_cycle}
\end{split}
\end{equation}
where $s \in \mathcal{S}$, and where $V(s_g) = \frac{r_g}{1 - \gamma^{\phi(s_g)}}$ and $\phi(s_g)$ is the minimum cycle distance for the MDP, as established in Theorem \ref{thm:single_source_peak_value}.
\label{thm:val_function}
\end{theorem}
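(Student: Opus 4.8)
The plan is to establish the two inequalities $V(s)\ge \gamma^{\delta(s,s_g)}V(s_g)$ and $V(s)\le \gamma^{\delta(s,s_g)}V(s_g)$ separately, exploiting the fact that immediate reward is zero everywhere except at $s_g$, so that the value of any state is governed entirely by \emph{when} a trajectory visits $s_g$. Since the MDP is deterministic, $V(s)$ is the supremum over action sequences $a^{(1)},a^{(2)},\dots$ of $\sum_{t\ge 0}\gamma^t r(s^{(t)})$; because $r$ is bounded and $0<\gamma<1$ this supremum is finite, and fully connectedness guarantees $\delta(s,s_g)<\infty$, so the claimed value is a genuine positive real. (An alternative route, which I would mention but not pursue, is to verify directly that the map $s\mapsto\gamma^{\delta(s,s_g)}V(s_g)$ satisfies the Bellman optimality equation --- using $\min_a\delta(T(s,a),s_g)=\delta(s,s_g)-1$ for $s\ne s_g$, $\min_a\delta(T(s_g,a),s_g)=\phi(s_g)-1$ at the goal, and the identity $V(s_g)(1-\gamma^{\phi(s_g)})=r_g$ from Theorem~\ref{thm:single_source_peak_value} --- and then invoke uniqueness of the fixed point of the contractive Bellman operator.)

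For the lower bound I would exhibit an explicit policy: from $s$, follow a shortest action sequence to $s_g$, which by the definition of $\delta$ has length exactly $\delta(s,s_g)$ and accrues no reward en route; then from $s_g$ follow the reward-maximizing behavior guaranteed by Theorems~\ref{thm:min_cycle_exists} and \ref{thm:single_source_peak_value}, whose cumulative discounted return from $s_g$ equals $V(s_g)$. The discounted return of this policy started at $s$ is $\gamma^{\delta(s,s_g)}V(s_g)$, hence $V(s)\ge\gamma^{\delta(s,s_g)}V(s_g)$.

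For the upper bound I would take an arbitrary action sequence from $s$ and let $t_1<t_2<\cdots$ be the (possibly empty, finite, or infinite) list of time steps at which the trajectory is at $s_g$ --- the only steps contributing reward. Two structural facts bound these: $t_1\ge\delta(s,s_g)$, since no sequence reaches $s_g$ from $s$ in fewer than $\delta(s,s_g)$ steps; and $t_{i+1}-t_i\ge\phi(s_g)$, since the segment of the trajectory from the $i$-th to the $(i{+}1)$-th visit is a cycle of $s_g$ and therefore has length at least that of the minimum cycle. Hence $t_i\ge\delta(s,s_g)+(i-1)\phi(s_g)$, and
\begin{equation*}
\sum_{i\ge 1}\gamma^{t_i} r_g \;\le\; \sum_{i\ge 1}\gamma^{\delta(s,s_g)+(i-1)\phi(s_g)} r_g \;=\; \gamma^{\delta(s,s_g)}\,\frac{r_g}{1-\gamma^{\phi(s_g)}} \;=\; \gamma^{\delta(s,s_g)}V(s_g),
\end{equation*}
the last equality being Theorem~\ref{thm:single_source_peak_value}. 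Taking the supremum over action sequences gives $V(s)\le\gamma^{\delta(s,s_g)}V(s_g)$, and combining with the lower bound finishes the proof. The edge cases $s=s_g$ (then $\delta=0$ and the statement reduces to Theorem~\ref{thm:single_source_peak_value}) and trajectories never visiting $s_g$ (return $0\le\gamma^{\delta(s,s_g)}V(s_g)$) are subsumed by the same inequalities.

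I expect the main obstacle to be the upper bound, specifically making airtight the claim $t_{i+1}-t_i\ge\phi(s_g)$: one must argue that any sub-trajectory leaving $s_g$ and later returning to $s_g$ is literally an instance of a \emph{cycle of $s_g$} in the sense of the earlier definition, so its length is at least $\phi(s_g)$ --- including the boundary situation where a self-transition is available and $\phi(s_g)=1$. A secondary, more routine obstacle is the reduction from the MDP's value function to a maximum over deterministic action sequences together with the interchange of the supremum and the convergent geometric series; this is standard for discounted deterministic problems but should be stated explicitly.
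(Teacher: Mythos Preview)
Your proof is correct but proceeds quite differently from the paper's. The paper argues by induction on the distance $\delta(s,s_g)$: for a state $s^{(1)}$ one step from $s_g$ it asserts $V(s^{(1)})=\gamma V(s_g)$ because no immediate reward is collected outside $s_g$, then for the inductive step observes that there is an action taking $s^{(n+1)}$ to some $s^{(n)}$ and concludes $V(s^{(n+1)})=\gamma V(s^{(n)})$. Your approach instead sandwiches $V(s)$ between matching lower and upper bounds obtained from explicit trajectory analysis. What your route buys is rigor: the paper's induction tacitly assumes at every step that the action moving toward $s_g$ is optimal --- the line ``the future discounted reward is then $\gamma\times V(s_g)$'' presupposes exactly the optimality that is to be proved --- whereas your upper-bound argument (bounding the visit times $t_i$ by $\delta(s,s_g)+(i-1)\phi(s_g)$ and summing the geometric series) is precisely the missing ingredient that rules out any alternative action sequence doing better. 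What the paper's route buys is brevity and a clear intuitive picture, once one is willing to take the greedy step for granted. Your parenthetical Bellman-fixed-point alternative is in fact the cleanest way to make the paper's style of argument airtight, since verifying the optimality equation and invoking uniqueness handles both directions at once.
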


\begin{proof}
Given then, that we now know that a minimum cycle exists, and that we know the value at the state $s_g$ where reward $r_g$ is collected, we turn to examine the value at all other states.  Given our definition of the reward function for a single source MDP, we note here that in all states other than $s_g$, no reward is collected.  We will prove by induction.

Let us now assume that we start not at state $s_g$, but at some state one action away from $s_g$, which we will denote as $s^{(1)} | \delta(s^{(1)}, s_g) = 1$.  Note that we already know from Theorem \ref{thm:single_source_peak_value} the value we will obtain once we reach state $s_g$, which we refer to here as $V(s_g)$.  We therefore know that since we must take one step to obtain this value $V(s_g)$, then the future discounted reward is then $\gamma \times V(s_g)$.  As no immediate reward is collected at state $s^{(1)}$ we know that the expected value at state $s^{(1)}$ is then simply the future discounted reward:  $V(s^{(1)}) = \gamma \times V(s_g)$.

\begin{equation}
\begin{split}
  V(s^{(1)}) &= \gamma \times V(s_g) \\
             &= \gamma ^ {\delta( s^{(1)}, s_g )} \times V(s_g) 
\end{split}
\end{equation} 

We now consider the the case where we have a minimum cycle distance of $s^{(n)} | \delta(s^{(n)}, s_g) = n$ and $s^{(n+1)} | \delta(s^{(n+1)}, s_g) = n+1$.  We can see clearly from the definition of the cycle distance that for any state $s^{(n+1)}$ there exists an action $a^* \in \A | f( s^{(n+1)}, a^* ) = s^{(n)}$.  Also given that reward is only collected at $s_g$, we again have no immediate reward when transitioning from state $s^{(n+1)}$ to state $s^{(n)}$ and need only consider future discounted reward.  Thus:

\begin{equation}
\begin{split}
  V(s^{(n+1)}) &= \gamma \times V(s^{(n)})  \\
\end{split}
\end{equation} 

This means that:

\begin{equation}
\begin{split}
  V(s^{(2)}) &= \gamma \times V(s^{(1)})  \\
             &= \gamma \times \gamma ^ {\delta( s^{(1)}, s_g )} \times V(s_g)  \\
             &= \gamma \times \gamma \times V(s_g) \\
             &= \gamma ^ {\delta( s^{(2)}, s_g )} \times V(s_g) 
\end{split}
\end{equation} 

Then by induction, we see that the value of any state $s$ is as follows, completing the proof:
\begin{equation}
\begin{split}
  V(s) &= \gamma^{\delta(s,s_g)} \times V(s_g)
\end{split}
\end{equation} 
\end{proof}

\newpage
\section{Appendix 4: Proof of Algorithm}

We remind the reader that the proofs for the algorithm in this section are claimed to only apply to a narrow subset of MDPs:
\begin{enumerate}
\item Deterministic continuous MDPs
\item Reward function based only on state (not action)
\item Only positive, real rewards (no negative rewards)
\end{enumerate}

While we will explore in future papers whether this method can be applied to a larger class of MDPs, we will start with this narrow definition.
To prove that our algorithm results in an optimal value function, we must prove the resulting value function satisfies Bellman optimality equation $V^* = L V^*$.  This is a complex, multi-step proof.  In Part 1, we establish that optimal value function is reached when the maximum possible value is found at each state.  In Part 2, we show that our algorithm's effect is to determine the maximum possible value at each state, thereby satisfying the Bellman optimality equation.

\subsection{Part 1: Bellman optimality and maximum value}

The Bellman equation and Bellman operator have been well studied by many sources.  For completeness, we repeat proofs available elsewhere regarding monotonicity, contraction over the max norm, and the uniqueness of the fixed point solution $V^*$.

\textbf{Monotonicity:} First, we repeat the well known property that the Bellman operator $L$ satisfies the property of monotonicity, which means that for any two value functions $V$ and $V'$ and given $V \leq V'$, then $L V \leq L V'$.

\begin{proof}

Note that the $\leq$ operator here is an element-wise operator:
$V \leq V' \rightarrow \forall s, V(s) \leq V'(s)$.

We then translate the inequality to a more convenient equivalent form:
\begin{align*}
L V &\leq L V'  \\
L V - L V' &\leq 0 \\
L[V(s)] - L[V'(s)] &\leq 0, \forall s
\end{align*}

Then, for all $s \in S$:
\begin{align*}
L[V(s)] &- L[V'(s)] &\leq 0 \\
R(s) + \max_a \big[ \gamma \sum_{s'} T( s, a, s') V(s') \big] &- R(s) - \max_a \big[ \gamma \sum_{s'} T( s, a, s') V'(s') \big] &\leq 0 \\
\max_a \big[ \gamma \sum_{s'} T( s, a, s') V(s') \big] &- \max_a \big[ \gamma \sum_{s'} T( s, a, s') V'(s') \big] &\leq 0 \\
\end{align*}

Since we know that $V \leq V'$, we then know that:
\begin{equation*}
\begin{split}
\max_a \big[ \gamma \sum_{s'} T( s, a, s') V(s') \big] - \max_a \big[ \gamma \sum_{s'} T( s, a, s') V'(s') \big]  \leq \\ \max_a \big[ \gamma \sum_{s'} T( s, a, s') V(s') \big] - \max_a \big[ \gamma \sum_{s'} T( s, a, s') V(s') \big] \\
\max_a \big[ \gamma \sum_{s'} T( s, a, s') V(s') \big] - \max_a \big[ \gamma \sum_{s'} T( s, a, s') V'(s') \big]  \leq 0
\end{split}
\end{equation*}
\end{proof}

\textbf{Contraction mapping:}  We then recall that the Bellman operator is a contraction over the max norm $|\cdot|_\infty$.


\begin{proof}
A contraction operator means that for any two functions $f$ and $g$,

\begin{equation*}
|\max_a f(a) - \max_a g(a)| \leq \max_a | f(a) - g(a) |,
\end{equation*}
where, again, the $\leq$ operator is taken to be element-wise and is true for all $a$.

Assume that $\max_a f(a) \geq \max_a g(a)$, that $a^* = arg max_a f(a)$ so that $\max_a f(a) = f(a^*)$.  Then,

\begin{align*}
|\max_a f(a) - \max_a g(a)| &= f(a^*) - \max_a g(a) \\
\end{align*}

Given our assumption that $\max_a f(a) \geq \max_a g(a)$, then $f(a^*) \geq g(a^*)$ and $f(a^*) - g(a^*)$ is positive.  Then:

\begin{align*}
|\max_a f(a) - \max_a g(a)| &\leq f(a^*) - g(a^*) \\
\end{align*}

Then from the definition of absolute value (also given that $f(a^*) - g(a^*)$ is a positive value):

\begin{align*}
|\max_a f(a) - \max_a g(a)| &= |f(a^*) - g(a^*)| \\
                            &= \max_a |f(a) - g(a)|
\end{align*}

Then, to prove that the Bellman operator is a contraction mapping, we must prove that:

\begin{align*}
|| L V - L V' ||_\infty \leq \gamma || V - V' ||_\infty
\end{align*}

From the definition of the max norm, for all $s \in S$:
\begin{align*}
| L V(s) - L V'(s) | \leq \gamma || V - V' ||_\infty
\end{align*}

\begin{align*}
| L V(s) - L V'(s) | &= |R(s) + \max_a \gamma \sum_{s'} T(s, a, s') V(s') - R(s) - \max_a \gamma \sum_{s'} T(s, a, s') V'(s') | \\
                     &= |\max_a \gamma \sum_{s'} T(s, a, s') V(s') - \max_a \gamma \sum_{s'} T(s, a, s') V'(s') | \\
\end{align*}

From our contraction mapping proof above, we can then use the result to say:
\begin{align*}
| L V(s) - L V'(s) | &= \max_a | \gamma \sum_{s'} T(s, a, s') V(s') -  \gamma \sum_{s'} T(s, a, s') V'(s') | \\
                     &= \max_a \gamma | \sum_{s'} T(s, a, s') ( V(s') - V'(s') ) | \\
                     &= \max_a \gamma \sum_{s'} T(s, a, s') | V(s') - V'(s') | \\
                     &\leq \max_a \gamma \sum_{s'} T(s, a, s') | V(s') - V'(s') | \\
                     &\leq \max_a \gamma \sum_{s'} T(s, a, s') || V - V' ||_\infty \\
\end{align*}

Given that we know $\sum_{s'} T(s, a, s') = 1$ for a given $a$:

\begin{align*}
| L V(s) - L V'(s) |  &\leq \max_a \gamma || V - V' ||_\infty \\
                      &\leq \gamma || V - V' ||_\infty \\
\end{align*}
\end{proof}

\textbf{Stationary:}  Then, given that it is a contraction mapping over the max norm, we repeat the well known property that the Bellman operator has a unique solution $V^*$ and that this optimal solution is fixed (or stationary under L).

\begin{proof}
We prove by counter proof.  Assume that there are two value functions $V, V'$ that are both fixed points under L, $V = L V$ and $V' = L V'$.

However, from our contraction mapping proof, we know that $||L V - L V' ||_\infty \leq \gamma || V - V' ||_\infty$.  But with our assumption that $LV = V$ and $LV' = V'$, this becomes:

\begin{align*}
||L V - L V' ||_\infty = || V - V' ||_\infty
\end{align*}

Recalling that $0 < \gamma < 1$, then we have the contradiction that:
\begin{align*}
|| V - V' ||_\infty \leq \gamma || V - V' ||_\infty
\end{align*}

which could only be true if $V$ and $V'$ are all zeros, or if $V = V'$, which in both cases reduces to $V = V'$, proving that there must only be one unique fixed point solution for the Bellman operator $L$.
\end{proof}

\begin{theorem}
Given the optimal policy $\pi^*$ and the associated value function $V^*$, the optimal value function has maximum value at each state:
\begin{equation*}
V^* \geq V^\pi, \forall \pi
\end{equation*}
\label{thm:opt_value_func}
\end{theorem}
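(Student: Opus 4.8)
The plan is to prove the claim by comparing the optimality operator $L$ with the fixed-policy evaluation operator, and then exploiting the monotonicity and contraction properties just established. First I would introduce, for an arbitrary stationary deterministic policy $\pi$, the evaluation operator $L^\pi$ defined by $L^\pi V(s) = R(s) + \gamma \sum_{s'} T(s,\pi(s),s') V(s')$. By exactly the same argument used above for $L$ (it is a $\gamma$-contraction in the max norm, since $\sum_{s'} T(s,\pi(s),s') = 1$), $L^\pi$ has a unique fixed point, which is precisely the value function $V^\pi$ of the policy $\pi$; finiteness of $V^\pi$ follows from $0 < \gamma < 1$ together with the boundedness of $R$.

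The key observation is the pointwise domination $L V \geq L^\pi V$ for every $V$ and every $\pi$: at each state $s$, the bracketed term defining $L V(s)$ is a maximum over all actions $a$, hence it is at least the quantity obtained by the particular choice $a = \pi(s)$, which is exactly $L^\pi V(s)$. Applying this inequality at $V = V^\pi$ and using that $V^\pi$ is the fixed point of $L^\pi$, I obtain $L V^\pi \geq L^\pi V^\pi = V^\pi$; that is, $L$ moves $V^\pi$ upward in the element-wise order (or leaves it unchanged).

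Next I would iterate $L$ and invoke monotonicity. Since $L V^\pi \geq V^\pi$, the monotonicity lemma gives $L^2 V^\pi \geq L V^\pi$, and inductively $L^{n+1} V^\pi \geq L^n V^\pi$ for all $n$, so the sequence $\{ L^n V^\pi \}_{n \geq 0}$ is non-decreasing at every state. On the other hand, the contraction property yields $\| L^n V^\pi - V^* \|_\infty \leq \gamma^n \| V^\pi - V^* \|_\infty \to 0$, so $L^n V^\pi \to V^*$ pointwise. A non-decreasing sequence whose limit is $V^*$ must stay below $V^*$ at every state, hence $V^\pi \leq L^n V^\pi \leq V^*$ for all $n$, giving $V^\pi \leq V^*$. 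Since $\pi$ was arbitrary, this establishes $V^* \geq V^\pi$ for all $\pi$.

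I expect the only delicate point to be the convergence $L^n V^\pi \to V^*$: the \emph{Stationary} argument above only records uniqueness of the fixed point, not convergence of iterates, so I would make explicit the standard Banach-fixed-point consequence that iterating a $\gamma$-contraction from an arbitrary starting function $W$ converges geometrically to that unique fixed point, via the one-line estimate $\| L^n W - V^* \|_\infty \leq \gamma \| L^{n-1} W - V^* \|_\infty \leq \cdots \leq \gamma^n \| W - V^* \|_\infty$. Everything else is a routine combination of the monotonicity and contraction lemmas already proven, plus the elementary fact that the max over actions dominates any single action.
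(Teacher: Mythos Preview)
Your argument is correct and, at the skeletal level, uses the same ingredients as the paper: monotonicity of $L$, its contraction property, and convergence of Bellman iterates to the unique fixed point $V^*$. The paper's proof simply asserts that value iteration produces a monotone chain $V_0 \leq V_1 \leq \cdots \leq V^*$ and concludes $V^\pi \leq V^*$ from that, without ever saying how $V^\pi$ enters the picture or why the chain is non-decreasing when started there.

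Your route differs in one substantive and useful way: you introduce the policy-evaluation operator $L^\pi$ and use the pointwise domination $L V \geq L^\pi V$ to obtain $L V^\pi \geq V^\pi$. This is exactly the missing hinge in the paper's version---it is what guarantees that iterating $L$ \emph{from $V^\pi$} produces a non-decreasing sequence, which the paper takes for granted. So your proof is not a different strategy so much as a properly fleshed-out version of the same one; what it buys is that the link between an arbitrary $V^\pi$ and the iterate sequence is made explicit rather than asserted.
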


\begin{proof}
Given the monotonicity of the Bellman operator $L$, we know that at each step of value iteration $L V \leq L V'$.  Given that the Bellman operator is a contraction, we also know that successive applications of the Bellman operator converge to the optimal policy $\pi^*$ with a corresponding value function $V^*$.  And because we know that the optimal value function $V^*$ is a unique, fixed point solution of $V^* = L V^*$, we know that once we reach the optimal solution under the contraction we will never diverge from it.  Thus the sequence of value functions is $V_0 \leq V_1 \leq V_2 \cdots \leq V^* \leq V^* \leq V^* ...$ and we can then conclude that $\forall s, \pi: V^*(s) \geq V^\pi(s)$. 
\end{proof}

Thus, to prove that the algorithm satisfies the Bellman optimality equation, we must show that the algorithm determines the maximum possible value at each state $s$.

\subsection{Part 2: Algorithm calculation of max value}

We turn now to examine the way in which reward is collected and how we can determine whether the algorithm in fact calculates the maximum value at every state.

\subsubsection{Zero versus Positive Reward}

We start first with the observation that the states can be broken into two general categories:  those with reward, which we define as $S^+ = s \in S | R(s) > 0$,  and those without reward, which we define as $S^Z = s \in S | R(s) = 0$.  (Recall that our definition of the reward function permits only positive rewards and that the rewards are based on the state and not on the action.  At this time, we do not claim to have solved the problem of rewards based on the action.)  Note that $S^Z$ and $S^+$ may be a collection of disjoint subsets of $S$.
\newcommand{\DrawSplus}[4]{
 \tkzDefPoint(#1+#3*(0.4   ) ,#2+#4*0.0   ){A}
 \tkzDefPoint(#1+#3*(1     ) ,#2+#4*0.4   ){B}
 \tkzDefPoint(#1+#3*(0.5   ) ,#2+#4*1.0   ){C}
 \tkzDefPoint(#1+#3*(0.1773) ,#2+#4*0.4421){D}
 \tkzDefPoint(#1+#3*(0.2   ) ,#2+#4*0.2   ){M}
 
 
 \tkzLabelPoint[above right](M){$S^+$}

 \draw [thick] plot [smooth cycle, tension=0.7] coordinates {
 (A) (B) (C) (D) 
 };
}
\newcommand{\DrawSzero}[4]{
 \tkzDefPoint(#1+#3*(0.2+(random*0.1)   ) ,#2+(#4*(0.1   +(random*.1) ) )  ){A}
 \tkzDefPoint(#1+#3*(0.5+(random*0.1)   ) ,#2+(#4*(0.18  +(random*.1) ) )  ){B}
 \tkzDefPoint(#1+#3*(0.7+(random*0.1)   ) ,#2+(#4*(0.15  +(random*.1) ) )  ){C}
 \tkzDefPoint(#1+#3*(1.0+(random*0.1)   ) ,#2+(#4*(0.3421+(random*.1) ) )  ){D}
 \tkzDefPoint(#1+#3*(0.8+(random*0.1)   ) ,#2+(#4*(0.9   +(random*.1) ) )  ){E}
 \tkzDefPoint(#1+#3*(0.3+(random*0.1)   ) ,#2+(#4*(0.7   +(random*.1) ) )  ){F}
 \tkzDefPoint(#1+#3*(0.2+(random*0.1)   ) ,#2+(#4*(0.4   +(random*.1) ) )  ){G}
 \tkzDefPoint(#1+#3*(0.4+(random*0.1)   ) ,#2+(#4*(0.4   +(random*.1) ) )  ){M}
 
 
 \tkzLabelPoint[above right](M){$S^Z$}

 \draw [thick] plot [smooth cycle, tension=0.7] coordinates {
 (A) (B) (C) (D) (E) (F) (G)
 };
}

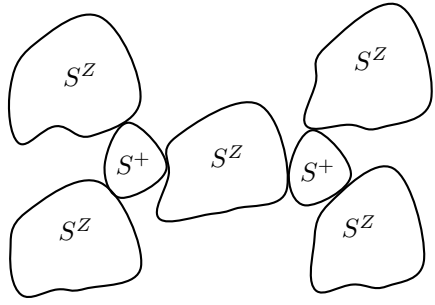
\begin{figure}[H]
\centering
\begin{tikzpicture}[label]
 \DrawSzero{0}{1}{2}{2}
 \DrawSzero{3.9}{1.05}{2}{2}
 \DrawSzero{3.9}{3.19}{2}{2}
 \DrawSzero{0}{3.05}{2}{2}
 \DrawSzero{2}{2}{2}{2}
 \DrawSplus{3.95}{2.5}{1}{1}
 \DrawSplus{1.5}{2.6}{1}{1}
\end{tikzpicture}
\caption{Disjoint subsets of $S^Z$ and $S^+$ that form $S$.}
\end{figure}
               
\begin{theorem}
The maximum of the value function cannot occur in states where the reward is 0.
\end{theorem}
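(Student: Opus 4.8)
The plan is to argue by contradiction directly from the Bellman optimality equation $V^* = LV^*$ established in Part~1, using only non-negativity of the rewards and the fully-connectedness hypothesis. First I would record two elementary facts about the optimal value function $V^*$. Since every immediate reward is non-negative, every policy earns a non-negative return from every state, so $V^*(s)\ge 0$ for all $s\in\mathcal{S}$. Moreover, since the MDP is fully connected and there is at least one reward source $s_g$ with $r_g>0$, the (generally sub-optimal) policy that simply walks from $s$ to $s_g$ already collects $\gamma^{\delta(s,s_g)} r_g>0$; hence $V^*(s)\ge \gamma^{\delta(s,s_g)} r_g>0$ for \emph{every} state, and in particular $M:=\max_{s\in\mathcal{S}}V^*(s)>0$.

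Next I would specialize the Bellman optimality equation to the deterministic setting: for every $s$, $V^*(s)=R(s)+\gamma\max_{a\in\A}V^*\!\big(T(s,a)\big)$. Suppose, toward a contradiction, that the maximum $M$ is attained at some state $\hat s\in S^Z$, so that $R(\hat s)=0$ and $V^*(\hat s)\ge V^*(s)$ for all $s$. Let $a^*$ be an optimal action at $\hat s$ and $s'=T(\hat s,a^*)$ its successor. Then
\[
V^*(\hat s)\;=\;R(\hat s)+\gamma\,V^*(s')\;=\;\gamma\,V^*(s')\;\le\;\gamma\,V^*(\hat s),
\]
where the last inequality uses that $\hat s$ maximizes $V^*$. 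Rearranging gives $(1-\gamma)\,V^*(\hat s)\le 0$, and since $0<\gamma<1$ this forces $V^*(\hat s)\le 0$, contradicting $V^*(\hat s)=M>0$. Therefore no state of $S^Z$ attains the maximum of $V^*$, i.e.\ the maximum is attained only at states of $S^+$.

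I do not expect a genuine obstacle here; the argument is essentially a one-line consequence of the Bellman backup at the maximizing state. The only point that needs care is ruling out the degenerate case $V^*\equiv 0$, in which $V^*(\hat s)\le 0$ would fail to be a contradiction — and that case is precisely what the fully-connected hypothesis together with the existence of a positive reward source excludes (second fact above). A minor bookkeeping remark: the argument uses only that $V^*$ satisfies its Bellman backup at $\hat s$, that $V^*$ is non-negative, and that $V^*$ dominates every other policy's value, the last two being supplied by Part~1 (Theorem~\ref{thm:opt_value_func}); consequently the same computation shows that \emph{any} non-negative value function obeying the Bellman backup must place its maximum on $S^+$, which will be convenient when reasoning about the intermediate value functions produced by \algo.
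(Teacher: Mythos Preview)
Your proof is correct and follows essentially the same route as the paper: both specialize the Bellman optimality equation at a zero-reward state to obtain $V^*(s_z)=\gamma V^*(s')$, then use $0<\gamma<1$ to conclude that $s_z$ cannot be a maximizer. Your version is somewhat more careful in explicitly invoking full-connectedness to rule out the degenerate case $V^*\equiv 0$, which the paper mentions but does not justify.
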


\begin{proof}
If we examine the recursive form of the Bellman equation at the optimal policy $\pi^*$ with the (stationary) $V^*$:

\begin{equation}
V^*(s) = R(s) + \gamma \max_a \sum_{s'} T(s, a, s') V^*(s')
\end{equation}

then for $s_z \in S^Z$ the value $V^*(s_z)$ is the discounted future reward.  Thus, if $a^* = arg max_a \sum_{s'} T(s_z, a, s') V^*(s')$, then $V^*(s_z) = \gamma \sum_{s'} T(s_z, a^*, s') V^*(s')$.  And given that the discount factor $0 < \gamma < 1$, we see that  $V^*(s_z) \leq \sum_{s'} T(s_z, a^*, s') V^*(s')$.  Furthermore, if $V^*(s') > 0$, then $V^*(s_z) < \sum_{s'} T(s_z, a^*, s') V^*(s')$, which is to say that $V^*(s_z)$ can only be equal to $V^*(s')$ if both are zero.  (We do not need to prove it here, but we will state that this can only occur if the value function is zero everywhere.)

Thus, as we know that $V^*(s_z)$ is strictly less than $V^*(s')$ we know that a maximum of the value function cannot occur for $s_z \in S^Z | R(s_z) = 0$.
\end{proof}

The converse then is that if the maximum of the value function $V^*$ cannot occur where $R(s) = 0$ (that is in $S^Z$), then it must occur where $R(s) > 0$ (that is, in $S^+$).

\begin{theorem}
States with reward of zero, $S^Z$, are determined from the states with non-zero reward, $S^+$.
\label{thm:determinism}
\end{theorem}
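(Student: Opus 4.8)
The plan is to show that the Bellman optimality equation forces the value at every zero-reward state to be an explicit, fixed function of the values at the positive-reward states, so that once $V^*$ is known on $S^+$ nothing further about $S^Z$ needs to be computed. Concretely, I will establish $V^*(s_z) = \max_{s' \in S^+}\gamma^{\delta(s_z,s')}\,V^*(s')$ for every $s_z \in S^Z$, which exhibits $V^*|_{S^Z}$ as a function of $V^*|_{S^+}$ and the stationary (precomputable) distances $\delta$.

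First I would fix an arbitrary $s_z \in S^Z$ and an optimal action $a^* \in \A$ at $s_z$. Since the MDP is deterministic and $R(s_z)=0$, the Bellman optimality equation collapses to $V^*(s_z) = \gamma\,V^*(T(s_z,a^*))$, i.e. $V^*(T(s_z,a^*)) = \gamma^{-1}V^*(s_z)$. Iterating this along the optimal policy produces a trajectory $s_z = s^{(0)}, s^{(1)}, s^{(2)}, \dots$ with $V^*(s^{(i)}) = \gamma^{-i} V^*(s_z)$, which is strictly increasing as long as the trajectory remains in $S^Z$ and $V^*(s_z) > 0$. (The degenerate case $V^*(s_z)=0$ is handled separately: by the fully-connected hypothesis every reward is reachable from $s_z$, so $V^*(s_z)=0$ can only occur when $R\equiv 0$, in which case $S^+ = \emptyset$ and $V^*$ is the zero function everywhere, trivially determined.)

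Next I would argue that the trajectory must leave $S^Z$ after finitely many steps. Strict monotonicity of $V^*$ along the trajectory rules out any repeated state, so the trajectory visits distinct states of the finite set $S$; since $V^*$ is bounded above (finitely many rewards, $0<\gamma<1$), it cannot grow geometrically without bound, so within some $k \le |S^Z|$ steps the trajectory enters a state $s_g := s^{(k)} \in S^+$, and no reward is collected at $s^{(0)},\dots,s^{(k-1)}$ (these all lie in $S^Z$ by the choice of $k$ as the first exit time). Hence $V^*(s_z) = \gamma^{k} V^*(s_g)$, and since $V^*(s_g) \ge R(s_g) > 0$ this comparison is nondegenerate. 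Optimality forces $k = \delta(s_z,s_g)$: a shorter route to $s_g$ would yield strictly larger discounted value, contradicting optimality of the trajectory, so $k \le \delta(s_z,s_g)$, while $k \ge \delta(s_z,s_g)$ by definition of $\delta$. Finally, for every $s' \in S^+$ the agent can travel directly from $s_z$ to $s'$ in $\delta(s_z,s')$ steps, collecting at least the discounted endpoint value, so $V^*(s_z) \ge \gamma^{\delta(s_z,s')}V^*(s')$; combining this with the equality attained at $s_g$ gives $V^*(s_z) = \max_{s' \in S^+}\gamma^{\delta(s_z,s')}V^*(s')$, completing the proof. The result that the maximum of $V^*$ cannot occur in $S^Z$, established just above, guarantees in particular that $S^+$ is nonempty whenever $V^*\not\equiv 0$, so the maximum on the right-hand side is well defined.

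The main obstacle is the finiteness argument of the third paragraph — ruling out an optimal trajectory that never exits $S^Z$. The key leverage is that along such a trajectory $V^*$ is \emph{strictly} increasing (the discount factor acting on a zero immediate reward), so the trajectory can neither revisit a state nor remain bounded, contradicting boundedness of $V^*$ on a finite state space; the degenerate case $V^*\equiv 0$ and the invocation of full connectedness (to ensure some $s' \in S^+$ is reachable at all) are the remaining bookkeeping. Everything else is a routine specialization of the Bellman optimality equation to the deterministic, state-only-reward setting, closely mirroring the propagation identity $V(s)=\gamma^{\delta(s,s_g)}V(s_g)$ already proved for a single source in Theorem~\ref{thm:val_function}.
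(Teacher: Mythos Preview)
Your proof is correct and follows essentially the same approach as the paper: iterate the deterministic Bellman relation $V^*(s_z)=\gamma V^*(T(s_z,a^*))$ along the optimal trajectory until it first enters $S^+$, then read off the value at $s_z$ from the value at that entry point. You are in fact more careful than the paper---you actually justify why the trajectory must leave $S^Z$ in finitely many steps (via strict monotonicity plus boundedness on a finite state space) and you extract the sharper explicit formula $V^*(s_z)=\max_{s'\in S^+}\gamma^{\delta(s_z,s')}V^*(s')$, whereas the paper simply asserts that ``at the next optimal action we reach a state in $S^+$'' without arguing termination.
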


\begin{proof}
From the relation $V^*(s_z) = \gamma \sum_{s'} T(s_z, a^*, s') V^*(s')$ that was developed in the previous proof, we can conclude that for all $s_z \in S^Z$, the resulting value function is determined solely by the value at another state, through the discounted future reward.  Thus, to know the value for any state $s_z \in S^Z$, we must look to another state to define the value.

Consider a chain of states in $S^Z$, \{ $s_z^{(1)}, s_z^{(2)}, \cdots, s_z^{(n)}$ \} and suppose that each element in the chain is the result of the optimal action at each step that satisfies $a* = arg max_a \gamma \sum_{s'} T(s_z, a, s') V^*(s')$.  What can we say of the value of these states?  We can say nothing, as none of them have any immediate value $R(s) > 0$.  Let us say that at the next optimal action $a^*$ we reach a state in $S^+$.  At this point in time we can definitively say that $V(s_z^{(1)}) > 0$ as the state in $S^+$ has an immediate reward greater than 0, and thus through the discount factor all states in our chain obtain some positive value.

Let us repeat this experiment for all states in $S^Z$.  In general, starting from any state $s_z \in S^Z$ and taking the optimal action $a^*$ at each step, we will form a chain of $s_z^{(1)}, s_z^{(2)}, \cdots, s_z^{(n)}, s_p$ with length $n+1$ that will terminate in a state $s_p \in S^+$.  At each step in the chain due to zero immediate reward in $S^Z$, the value $V( s_z^{(k)} ) = \gamma \times V( s_z^{(k+1)} )$, where $k = \{ 1 ... (n-1) \}$.  And finally when the chain terminates at $s_p \in S^+$, $V( s_z^{(n)} ) = \gamma * V( s_p )$.  Thus by induction we have shown that all states in $S^Z$ have a value that is determined by a state in $S^+$.
\end{proof}

Illustrating this concept:

\begin{figure}[H]
\centering
\begin{tikzpicture}[label]
 \DrawSzero{0}{1}{8}{5}
 \DrawSplus{8.3}{2.5}{1}{1}
 \tkzDefPoint(5.5,4  ){A}
 \tkzDefPoint(5.8,3.5){B}
 \tkzDefPoint(6.0,3.2){C}
 \tkzDefPoint(6.5,2.9){D}
 \tkzDefPoint(7.2,3.0){E}
 \tkzDefPoint(8.5,3.0){F}
 \tkzSetUpPoint[fill=red]
 \tkzDrawPoints[color=red, size=10](A,B,C,D,E,F)
 \draw [thick, densely dotted] plot [color=red] coordinates {
 (A) (B) (C) (D) (E) (F) 
 };
 \tkzLabelPoint[above right](A){$s_z^{(1)}$}
 \tkzLabelPoint[above right](B){$s_z^{(2)}$}
 \tkzLabelPoint[below      ](E){$s_z^{(n)}$}
 \tkzLabelPoint[below left=3pt ](F){$s_p$}
 
\end{tikzpicture}
\caption{Optimal sequence of actions through $S^Z$ until a point in $S^+$ is reached.}
\end{figure}
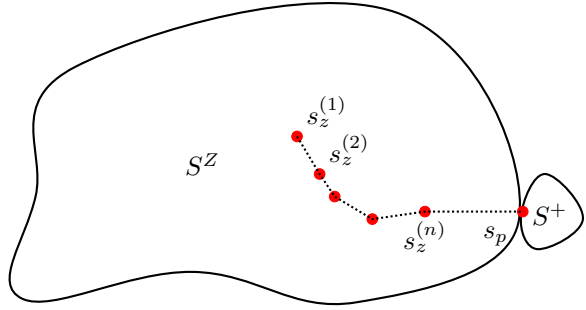

Thus, we have established that all states where reward is zero are deterministic with respect to states with positive reward, and that the maximum of the value function cannot occur where reward is zero.  Thus, in order to fully determine the value function, we need only consider the states where reward occur.  This is a key conclusion that underpins the algorithm.

We now expand on the nature of the maximum value at each state of the optimal value function.

\begin{theorem}
Given the optimal policy $\pi^*$ and the associated value function $V^*$, the optimal value function at each state is equivalent to:
\begin{equation*}
V^*(s) = \max_\pi V^\pi(s), \forall s \in S
\end{equation*}
\label{thm:opt_value_func}
\end{theorem}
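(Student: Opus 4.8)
The plan is to prove the two inequalities $V^*(s) \ge \max_\pi V^\pi(s)$ and $V^*(s) \le \max_\pi V^\pi(s)$ for every $s \in S$ separately. The first is essentially already in hand: the preceding theorem establishes $V^*(s) \ge V^\pi(s)$ for every policy $\pi$ and every state $s$, and since $S$ and $A$ are finite the set of (deterministic) policies is finite, so the supremum over $\pi$ is attained and $V^*(s) \ge \max_\pi V^\pi(s)$ for all $s$. All the work is in the reverse inequality, which amounts to exhibiting a single policy whose value function actually equals $V^*$.

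For that, I would introduce the greedy policy $\pi^*$ with respect to $V^*$,
\[
  \pi^*(s) \in \operatorname*{arg\,max}_a \Big[ R(s) + \gamma \sum_{s'} T(s,a,s') V^*(s') \Big],
\]
which is well defined because $A$ is finite, and the associated one-step policy-evaluation operator $(L^{\pi^*} V)(s) = R(s) + \gamma \sum_{s'} T(s,\pi^*(s),s') V(s')$. By the definition of $\pi^*$ together with the Bellman optimality equation $V^* = L V^*$ proved above, applying $L^{\pi^*}$ to $V^*$ simply reproduces $V^*$, i.e. $V^*$ is a fixed point of $L^{\pi^*}$. The contraction-mapping argument given earlier in this appendix carries over verbatim with the single action $\pi^*(s)$ in place of the $\max_a$, so $L^{\pi^*}$ is a $\gamma$-contraction in the max norm and therefore has a unique fixed point; but that unique fixed point is, by definition, $V^{\pi^*}$. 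Hence $V^{\pi^*} = V^*$, and consequently $\max_\pi V^\pi(s) \ge V^{\pi^*}(s) = V^*(s)$ for all $s$. Combining the two inequalities yields $V^*(s) = \max_\pi V^\pi(s)$ for all $s \in S$.

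The main obstacle is nothing deep but does need care: it is the step $V^{\pi^*} = V^*$. Being greedy is not by itself enough — one must use that $V^*$ is simultaneously a fixed point of $L$ and of the fixed-policy operator $L^{\pi^*}$, and then invoke uniqueness of the fixed point of the contraction $L^{\pi^*}$ to identify that fixed point with $V^{\pi^*}$. The attainment of the $\operatorname{arg\,max}$ defining $\pi^*$ (and of the $\max$ over policies) is exactly where finiteness of the action set is used; if one wanted to drop finiteness one would instead argue with $\varepsilon$-greedy policies and pass to a limit.
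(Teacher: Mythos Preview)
Your proof is correct, but it takes a somewhat different route from the paper. The paper's argument is very short: having already established $V^*(s) \geq V^\pi(s)$ for all $\pi$ and $s$, it simply observes that the theorem's hypothesis hands you an optimal policy $\pi^*$ whose associated value function \emph{is} $V^*$, so $V^* = V^{\pi^*}$ is one of the $V^\pi$'s and the reverse inequality $V^*(s) \leq \max_\pi V^\pi(s)$ follows immediately. In other words, the paper reads ``$V^* \in \{V^\pi : \pi\}$'' straight off the hypothesis.

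You instead \emph{construct} a policy achieving $V^*$: you take the greedy policy with respect to $V^*$, note that $V^*$ is then a fixed point of the single-policy operator $L^{\pi^*}$, and invoke the contraction property of $L^{\pi^*}$ to identify that fixed point with $V^{\pi^*}$. This is more work than the paper does, but it buys something real: your argument does not depend on the hypothesis that $V^*$ is already known to be the value function of some policy, so it stands on its own as a proof that the fixed point of the Bellman optimality operator is attained by a policy. The paper's version is terser but leans entirely on that hypothesis (and is arguably a bit circular in its phrasing ``by definition that $V^* \in V^\pi$''). Either route is fine here; yours is the more self-contained one.
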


\begin{proof}
\begin{equation*}
\begin{split}
V^* &\geq V^\pi, \forall \pi \\
V^*(s) &\geq V^\pi(s), \forall \pi, \forall s \in S \\
       &\geq \max_\pi V^\pi(s), \forall s \in S \\
\end{split}
\end{equation*}

As we know by definition that $V^* \in V^\pi$ and we have already proven that $V^*$ is in fact the maximum value at each state, we can strengthen our statement with:

\begin{equation*}
\begin{split}
V^*(s) &= \max_\pi V^\pi(s), \forall s \in S \\
\end{split}
\end{equation*}
\end{proof}

Thus we need only prove that the algorithm produces the maximum value at each state in $S^+$.

\subsubsection{Reward Collection}

We now examine the possible ways that each reward can possibly be collected:

\begin{definition}
Given a policy $\pi$, select an initial state $s^{(0)}$ and follow the policy.  The resulting path through the state space is $p = \{ s^{(i)}, ... \}, \forall i = \{0 ... \infty\}$.  If a given state $s_k = s^{(i)}$ for some $i$, then we say that $s_k$ has been visited.  If a reward $r_k$ is present at $s_k$, then we additionally say that reward $r_k$ has been collected.  We denote the count of the number of times that $r_k$ has been collected as $N_k$.
\end{definition}

\begin{theorem}
All rewards $R = \{ r_1, r_2, ..., r_N \}$ are collected either once or infinitely under a given policy $\pi$.  That is, for a given reward $r_i \in R, N_k = \{ 1, \infty \}$ , and only rewards falling within a minimum cycle of a local maximum in the value function are collected infinitely. 
\end{theorem}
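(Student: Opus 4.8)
The plan is to separate the statement into a \emph{dichotomy} part, $N_k\in\{1,\infty\}$ for every collected reward, and a \emph{characterization} part describing exactly which rewards fall in the $\infty$ case; the first part is pure finiteness-plus-determinism, while the second needs Bellman optimality.

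For the dichotomy, fix the policy $\pi$ and an initial state $s^{(0)}$. Because $\pi$ is deterministic and $T$ is a deterministic transition map, the path $p=\{s^{(0)},s^{(1)},\dots\}$ is a single fully determined orbit in the finite set $S$, so by pigeonhole some state recurs, $s^{(i)}=s^{(j)}$ with $i<j$, and determinism then forces $s^{(i+m)}=s^{(j+m)}$ for all $m\ge 0$; hence $p$ is eventually periodic, say with period $L$ from some index $T$ onward. I would then observe that a state appearing on $p$ at two distinct times must lie on this periodic tail and therefore recurs infinitely, whereas a state appearing at most once contributes a single visit --- so for a reward $r_k$ at $s_k$ we have $N_k\in\{0,1,\infty\}$, with $N_k\in\{1,\infty\}$ exactly for the rewards that are collected at all. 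Moreover the reward states with $N_k=\infty$ are precisely the ones lying on the terminal cycle $\mathcal{C}=\{s^{(T)},\dots,s^{(T+L-1)}\}$ traced out by $\pi$, which reduces the second part to identifying $\mathcal{C}$.

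For the characterization I would carry out the argument for the optimal policy $\pi^*$ with value function $V^*$ (the case the algorithm cares about; an arbitrary policy need not have its terminal cycle anchored at a maximum of its own value function). Let $s^{*}\in\mathcal{C}$ attain $\max_{s\in\mathcal{C}}V^*(s)$. Combining the deterministic Bellman optimality equation $V^*(s^{*})=R(s^{*})+\gamma\,V^*\!\big(T(s^{*},\pi^*(s^{*}))\big)$ with the optimality inequality $V^*(s^{*})\ge R(s^{*})+\gamma\,V^*\!\big(T(s^{*},a)\big)$ valid for every action $a$ yields $V^*\!\big(T(s^{*},a)\big)\le V^*\!\big(T(s^{*},\pi^*(s^{*}))\big)\le V^*(s^{*})$ for all $a$; since $T(s^{*},\pi^*(s^{*}))$ is the $\mathcal{C}$-successor of $s^{*}$ and lies on $\mathcal{C}$, every one-step neighbor of $s^{*}$ has value at most $V^*(s^{*})$, i.e. $s^{*}$ is a local maximum of $V^*$. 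Listing $\mathcal{C}$ as $c_0=s^{*},c_1,\dots,c_{L-1}$ and summing discounted rewards once around the loop gives $V^*(s^{*})=\big(\sum_{j=0}^{L-1}\gamma^{j}R(c_j)\big)/(1-\gamma^{L})$, and optimality forces this loop to be value-maximal among all cycles reachable from $s^{*}$ over all entry offsets --- which is the sense in which $\mathcal{C}$ is the minimum cycle of the local maximum $s^{*}$, specializing to the shortest cycle when the reward source is unique as in Theorem~\ref{thm:single_source_peak_value}. Thus every reward with $N_k=\infty$ sits inside the minimum cycle of the local maximum $s^{*}$, while every other reward is collected finitely, i.e. once by the dichotomy.

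The step I expect to be the main obstacle is that last reconciliation. The paper's definition of \emph{minimum cycle} is phrased in terms of shortest \emph{distance}, but with several rewards the optimal policy may prefer a longer loop precisely because it picks up extra reward (the ``combined peak'' situation), so I would need either to reconcile the distance-minimal and value-maximal readings of ``minimum cycle'' or to restate the theorem with the value-optimal loop intended. I would also want to handle ties cleanly (several states on $\mathcal{C}$ sharing the maximal value, or several optimal loops --- any witness suffices) and to dispose of the all-zero value function as a trivial edge case. The pigeonhole/periodicity reasoning and the local-maximum deduction from Bellman optimality are routine by comparison.
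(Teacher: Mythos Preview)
Your proposal is correct and, for the dichotomy part especially, more rigorous than the paper's own argument. The routes differ in a way worth noting.

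For the $\{1,\infty\}$ dichotomy, you invoke finiteness and determinism to get eventual periodicity via pigeonhole, which cleanly yields $N_k\in\{0,1,\infty\}$ and identifies the $\infty$ case with the periodic tail. The paper instead argues informally that from any non-maximum the optimal action moves toward a neighbor of higher value, so the trajectory ``ascends'' until it reaches a local maximum and then loops; it never really isolates the $\{0,1,\infty\}$ trichotomy and its claim that \emph{every} reward is collected at least once conflates the fact that a policy is defined at every state with visitation from a fixed initial state --- a point you handle correctly by restricting $\{1,\infty\}$ to rewards that are collected at all. For the characterization, you derive the local-maximum property of the cycle's top-value state directly from the Bellman optimality equation and inequality, whereas the paper essentially asserts that once a local maximum is reached one ``necessarily'' enters its minimum cycle. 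Your approach buys a cleaner, verifiable argument; the paper's buys an intuitive picture (gradient ascent to a peak, then orbit) that motivates the baseline/combined/delta taxonomy used downstream.

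You are also right to flag the tension between the paper's distance-minimal definition of \emph{minimum cycle} and the value-maximal loop actually traced by $\pi^*$ in the multi-reward (combined-peak) case; the paper does not resolve this either, so your proposed reconciliation or restatement is the honest move rather than a defect of your proof.
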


\begin{proof}
To prove that all rewards in an MDP are collected at least once, we note that a \textcolor{accepted}{the optimal} policy is the optimal action from all states $s \in S$.  Given that all rewards $R(s)$ must by definition fall within an $s \in S$, then we can conclude that every reward will be collected at least once.

For the remainder of the proof we make the simple observation that every point in the value function is by definition either a local maximum or not a local maximum at at given state $s_i$.

From this, if a state $s_i \in S$ is not a local maximum, then the optimal action $a^*$ will cause the state $s_i$ to be exited in favor of a next state $s_j$, \textcolor{accepted}{which is a neighbor state of $s_i$ with maximum value}. This process will continue until a local maximum is reached.

When a local maximum $s_M$ is reached, we necessarily then enter a minimum cycle $C^*(s_M)$ which by definition is a cycle where a primary reward and optionally one or more secondary rewards are collected infinitely.  Formally, a local maximum is thus defined as:  $V(s_i) \geq V(s) \forall s \in S | \delta(s, s_i) = 1, \forall s_i \in \mathcal{C}^*(s_M)$.

Therefore, a given reward must be collected once or infinitely, and only rewards in a minimum cycle (which is a local maximum) are collected infinitely.
\end{proof}

%
%
%


We note that the propagation operator $\mathcal{P}$ forms an exponential decay curve from the peak value.

The value functions for the baseline and delta baseline are simply the propagation of the peaks, $\mathcal{P}_{\mathcal{B}^i}(s)$ and $\mathcal{P}_{\Delta^d}(s)$ respectively. 

\begin{theorem}
The value function for the combined baseline is the sum of the baselines for each peak.

Given two reward sources $r_p$ at state $s_p$ and $r_s$ at state $s_s$, where $r_p \geq r_s$ and $r_s$ is within the minimum cycle of $r_p$, the value function for the combined peak is equal to the sum of the baselines of each peak:
\begin{equation}
\begin{split}
V(s) &= \mathcal{P}_{\mathcal{B}^p}(s) + \mathcal{P}_{\mathcal{B}^s}(s) \\
\end{split}
\label{eq:comb_value}
\end{equation}
\end{theorem}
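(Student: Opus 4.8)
The plan is to prove that, for the MDP whose only rewards are $r_p$ at $s_p$ and $r_s$ at $s_s$ with $r_p\ge r_s$ and $s_s$ inside $\mathcal{C}^*(s_p)$, the optimal value function $V$ agrees at every state with $\mathcal{P}_{\mathcal{B}^p}(s)+\mathcal{P}_{\mathcal{B}^s}(s)$. Since Part~1 establishes that $V$ is the unique Bellman fixed point, it suffices to exhibit a stationary policy attaining this value everywhere together with the matching upper bound. I would organize this in three steps: identify the optimal recurrent behaviour, evaluate it at $s_p$, and then propagate outward, exactly mirroring the structure of the single–source argument.

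\textbf{Step 1 (the optimal limit cycle).} Here I would invoke the reward–collection theorem proved just above: under the optimal policy every reward is collected once or infinitely, and a reward is collected infinitely only if it lies inside the minimum cycle of a local maximum. Traversing $\mathcal{C}^*(s_p)$ forever collects \emph{both} $r_p$ and $r_s$ once per revolution of length $\phi(s_p)$, so its per–revolution discounted yield at $s_p$ is $r_p+\gamma^{\delta(s_p,s_s)}r_s$; any competing recurrent cycle either lies inside $\mathcal{C}^*(s_p)$ as well or collects a total reward no larger than $r_s$, and because $r_p\ge r_s$ and $0<\gamma<1$ the combined cycle through $s_p$ strictly dominates. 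Hence $s_p$ is the unique global maximum of $V$ and the optimal policy is absorbed into $\mathcal{C}^*(s_p)$.

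\textbf{Step 2 (value at $s_p$) and Step 3 (propagation).} Following $\mathcal{C}^*(s_p)$ from $s_p$, reward $r_p$ arrives at times $0,\phi(s_p),2\phi(s_p),\dots$ and $r_s$ at times $\delta(s_p,s_s),\delta(s_p,s_s)+\phi(s_p),\dots$, so a geometric summation gives
\begin{equation*}
V(s_p)=\sum_{k\ge 0}\gamma^{k\phi(s_p)}\bigl(r_p+\gamma^{\delta(s_p,s_s)}r_s\bigr)=\frac{r_p}{1-\gamma^{\phi(s_p)}}+\gamma^{\delta(s_p,s_s)}\frac{r_s}{1-\gamma^{\phi(s_p)}}.
\end{equation*}
Since $s_s$ lies on $\mathcal{C}^*(s_p)$ it inherits the same minimal cycle length, $\phi(s_s)=\phi(s_p)$ (automatic in the grid world, where every $\phi$ equals $2$ and adjacent sources satisfy $\delta(s_p,s_s)=1$), so the right–hand side is exactly $\mathcal{P}_{\mathcal{B}^p}(s_p)+\mathcal{P}_{\mathcal{B}^s}(s_p)$. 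For a general state $s$ I would then replay the induction of the single–source value theorem along a shortest path to $s_p$: each intermediate state has zero immediate reward and a neighbour one step closer, so $V$ decays by a factor $\gamma$ per step, giving $V(s)=\gamma^{\delta(s,s_p)}V(s_p)$; distributing $\gamma^{\delta(s,s_p)}$ over the two terms and using $\delta(s,s_p)+\delta(s_p,s_s)=\delta(s,s_s)$ when $s_p$ is the nearer source (and the symmetric identity $\delta(s,s_s)+\delta(s_s,s_p)=\delta(s,s_p)$ when $s_s$ is nearer) collapses it to $\mathcal{P}_{\mathcal{B}^p}(s)+\mathcal{P}_{\mathcal{B}^s}(s)$.

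The main obstacle I anticipate is the distance bookkeeping in the propagation step together with the optimality argument in Step~1: one must check that the optimal route from a generic $s$ produces the same closed form whether it happens to sweep through $s_s$ "for free" on the way to $s_p$ or only reaches $s_s$ via the limit cycle. This works out precisely because $s_p$ and $s_s$ are one action apart inside $\mathcal{C}^*(s_p)$, which forces $\delta(s,s_p)$ and $\delta(s,s_s)$ to differ by exactly one and makes the two geometric series interleave without overlap. I would discharge it by casing on $\mathrm{sign}\bigl(\delta(s,s_p)-\delta(s,s_s)\bigr)$ and reducing each case to the same elementary summation already carried out for the single–source theorem, and I would flag the equality $\phi(s_s)=\phi(s_p)$ (and, more generally, the absence of short "escape" cycles at $s_s$) as the hypothesis that pins the statement to the grid–world setting.
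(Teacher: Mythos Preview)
Your proposal is essentially correct but follows a different route from the paper. The paper does not construct the optimal policy and sum a geometric series; instead it simply \emph{verifies} that the candidate $V(s)=\mathcal{P}_{\mathcal{B}^p}(s)+\mathcal{P}_{\mathcal{B}^s}(s)$ satisfies the Bellman optimality equation at every state, splitting into the three cases $s=s_p$, $s=s_s$, and $s\notin\{s_p,s_s\}$ and using the grid-world fact that any action changes $\delta(s,s_p)$ and $\delta(s,s_s)$ by $\pm 1$. Uniqueness of the fixed point (Part~1) then finishes. Your approach---identify the absorbing cycle, sum the series at $s_p$, propagate outward---is closer in spirit to the single-source proof in Appendix~3 and yields more operational insight into \emph{why} the formula holds, at the cost of needing the separate optimality argument in your Step~1. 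One point to tighten: the blanket claim $V(s)=\gamma^{\delta(s,s_p)}V(s_p)$ in Step~3 is false when $s_s$ is the nearer source (the shortest path then passes through $s_s$, collects $r_s$, and the correct relation is $V(s)=\gamma^{\delta(s,s_s)}V(s_s)$); you clearly see this in your case split and obstacle paragraph, but the formula as written should itself be cased on which of $s_p,s_s$ is closer rather than patched afterwards with the distance identities.
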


\begin{proof}
  For any state $s \in \mathcal{S}$, we will show that the value function in \ref{eq:comb_value} satisfies the Bellman Optimality Equation.

For the case $s=s_p$, we have:
\begin{equation}
  V(s_p) = r_p + \gamma \max_a V(T(s,a))
\end{equation}
It should be noted that $\max_a V(T(s,a)) = V(s_s) = \gamma \bb^p(s) + \bb^s(s)$, since any other action will take the agent to state with distance 1 to $s_p$ and distance 2 to $s_s$, which will have value $\gamma \bb^p + \gamma^2 \bb^s$, which is less than $V(s_s)$. Thus we have:
\begin{equation}
\begin{split}
  V(s_p) &= r_p + \gamma (\gamma \bb^p + \bb^s)  \\
         &= \bb^p + \gamma \bb^s \\
  V(s) &= \mathcal{P}_{\bb^p}(s) + \gamma \mathcal{P}_{\bb^s}(s), \forall s \in S
\end{split}
\end{equation}
which is consistent with the value function in \ref{eq:comb_value}.

For the case $s = s_s$, it is similar to the case $s = s_p$.

For the case $s \neq s_p$ and $s\neq s_s$. We first note that for a 2D grid world MDP with two neighboring reward state $s_p, s_s$, the effect of an action is to lead the agent one step further from the one reward state, e.g. $s_p$ or one step nearer to this reward state. Assuming for our current state $s$, the distance from $s$ to $s_p$ , denoted as $\delta(s,s_p)$, is $n$. And the distance from $s$ to $s_s$, denoted as $\delta(s,s_s)$ is $n+1$ (it can also be $n-1$, and the proof would be similar). Then after one action, $\delta(s,s_p)=n-1$ or $\delta(s,s_p) = n+1$, and $\delta(s,s_s) = n$ or $\delta(s,s_s) = n+2$. Then according to Bellman Equation, we have
\begin{equation}
  V(s) = \gamma \max_a V(T(s,a))
\end{equation}
Since $\gamma^{n-1} \bb^p > \gamma^{n+1} \bb^p$ and $\gamma^{n} \bb^s > \gamma^{n+2} \bb^s$, the action that leads to $\delta(s,s_p) = n-1$ and $\delta(s,s_s) = n$ will be the optimal action. So we have
\begin{equation}
\begin{split}
  V(s_p) &= \gamma (\gamma^{n-1} \bb^p + \gamma^{n} \bb^s)	\\
   &= \gamma^{n} \bb^p + \gamma^{n+1} \bb^s	\\
   &= \gamma^{\delta(s,s_p)} \bb^p + \gamma^{\delta(s,s_s)} \bb^s \\
  V(s) &= \mathcal{P}_{\bb^p}(s) + \gamma \mathcal{P}_{\bb^s}(s), \forall s \in S
\end{split}
\end{equation}
which is consistent with the value function in \ref{eq:comb_value}.
\end{proof}

Thus we have identified the three possible ways that a reward in $S^+$ can be collected.  How do we then select between these alternatives in order to find the optimal value function $V^*$?

\subsubsection{Constructing Value Function}

Here we show that the optimal value function is formed from a combination of the baselines defined in the previous section.

\begin{definition}
Let $R = \{ r_1, r_2 ... r_N \}$ be the set of rewards sources in an MDP, and let $|R| = N$ be the number of reward sources.  Let the set of all possible baseline value functions be $\mathcal{P}_{\mathcal{B}} = \{ \mathcal{P}_{\mathcal{B}_1}, ... \mathcal{P}_{\mathcal{B}_N} \}$.  Similarly, let the set of all possible combined baseline value functions be $\mathcal{P}_{\Gamma} = \{ \mathcal{P}_{\Gamma_1}, ... \mathcal{P}_{\Gamma_N} \}$ and the set of all possible delta baseline value functions be $\mathcal{P}_{\Delta} = \{ \mathcal{P}_{\Delta_1}, ... \mathcal{P}_{\Delta_N} \}$.  Then let $\mathcal{M} = \mathscr{P} \left( \mathcal{P}_{\mathcal{B}} \cup \mathcal{P}_{\Gamma} \cup \mathcal{P}_{\Delta} \right)$ be the power set of all possible baselines and $M \subset \mathcal{M}$ be one such selected combination of baselines.  We denote a specific value function for a baseline with within $M$ as $M_i$.
\end{definition}

\begin{definition}
For a specific combination of baselines $M \in \mathcal{M}$, we define the value function $V^M$ as the maximum value over all value functions in $M$:
\begin{equation*}
V^M(s) = \max_{M_i \in M} M_i(s), \forall s \in S
\end{equation*}

We denote the set of all value functions formed by $\mathcal{M}$ as $V^{\mathcal{M}} = \{ V^M \}, \forall M \in \mathcal{M}$.
\end{definition}

\begin{definition}
We denote as $V^{\alpha}$ as the region between the optimal value function $V^*$ and the zero-function $V_{\emptyset}(s) = 0$.
\begin{equation*}
0 \leq V^{\alpha}(s) \leq V^*(s), \forall s \in S
\end{equation*}
\end{definition}

We pause now to consider these definitions and informally relate them to traditional well known intuitions between policies and value functions.  We note that traditionally $V^\pi$ is defined as the set of value functions formed by all possible policies $\pi \in \Pi$.  We also note that value iteration iteratively searches through a countably infinite set of functionals that estimate $V^*$, asymptotically approaching $V^*$, and that the set of such functions becomes finite when a stopping criterion such as the bellman residual is used.  We note that there are an uncountably infinite number of functions $f(s) \in V^{\alpha}$, many of which cannot be part of $V^\pi$ because no policy can generate these functions under the MDP.  

In general, a policy $\pi^M$ can be extracted from any value function $V^M \in V^{\mathcal{M}}$, and this $\pi^M$ is guaranteed to fall within $\Pi$ because $\Pi$ by definition contains all possible policies for the given MDP definition.

We can think of $V^M$ as considering a subset of the original MDP problem, where the state and action space are identical, but with a subset of the rewards.  Therefore, when a policy is extracted from $V^M$ and then applied to the full MDP formulation, a function in $V^\pi$ is generated.  Thus, generally, $V^\mathcal{M}$ lies outside of $V^\pi$.  However, $V^\mathcal{M}$ and $V^\pi$ both contain the optimal solution $V^*$ (which will be proven below) and thus the optimal solution within $V^\mathcal{M}$ is also an optimal solution within $V^\pi$.

\newcommand{\DrawBean}[6]{
 \tkzDefPoint(#1+#2*(0.4   ) ,#3*0.8   ){A}
 \tkzDefPoint(#1+#2*(1     ) ,#3*0.4   ){B}
 \tkzDefPoint(#1+#2*(1.8   ) ,#3*1.0   ){C}
 \tkzDefPoint(#1+#2*(2.3773) ,#3*1.4421){D}
 \tkzDefPoint(#1+#2*(2.6905) ,#3*2.1074){E}
 \tkzDefPoint(#1+#2*(2.3752) ,#3*2.8828){F}
 \tkzDefPoint(#1+#2*(1.4   ) ,#3*3.0   ){G} 
 \tkzDefPoint(#1+#2*(0.6   ) ,#3*2     ){H} 
 
 \tkzDefPoint(#1+#2*(1.0   ) ,#3*0.8   ){I} 
 \tkzDefPoint(#1+#2*(1.8   ) ,#3*1.6   ){J} 
 \tkzDefPoint(#1+#2*(1.2   ) ,#3*2.0   ){K} 
 \tkzDefPoint(#1+#2*(0.7   ) ,#3*1.3   ){L} 
 
 \tkzDefPoint(#1+#2*(1.4   ) ,#3*1.5   ){M} 
 \tkzDefPoint(#1+#2*(2.4   ) ,#3*2.1   ){N} 
 
 
 \tkzLabelPoint[above      ](I){#5}
 \tkzLabelPoint[above left](M){#6}
 \tkzLabelPoint[below left](N){#4}

 \node [color=red] at (M) {\textbullet};

 \draw plot [smooth cycle, tension=0.7] coordinates {
 (A) (B) (C) (D) (E) (F) (G) (H)
 };
 \draw plot [smooth cycle, tension=0.7] coordinates {
 (I) (J) (K) (L) 
 };
}

\newcommand{\DrawOther}[6]{
 \tkzDefPoint(#1+#2*(0.4   ) ,#3*0.8   ){A}
 \tkzDefPoint(#1+#2*(1     ) ,#3*0.4   ){B}
 \tkzDefPoint(#1+#2*(1.8   ) ,#3*1.0   ){C}
 \tkzDefPoint(#1+#2*(2.3773) ,#3*1.4421){D}
 \tkzDefPoint(#1+#2*(2.6905) ,#3*2.1074){E}
 \tkzDefPoint(#1+#2*(2.3752) ,#3*2.8828){F}
 \tkzDefPoint(#1+#2*(1.4   ) ,#3*3.0   ){G} 
 \tkzDefPoint(#1+#2*(0.6   ) ,#3*2     ){H} 
 
 \tkzDefPoint(#1+#2*(0.2   ) ,#3*0.8   ){I} 
 \tkzDefPoint(#1+#2*(1.0   ) ,#3*1.4   ){J} 
 \tkzDefPoint(#1+#2*(0.3   ) ,#3*2.5   ){K} 
 \tkzDefPoint(#1+#2*(-.4   ) ,#3*1.3   ){L} 
 
 \tkzDefPoint(#1+#2*(0.6   ) ,#3*1.4   ){M} 
 \tkzDefPoint(#1+#2*(2.4   ) ,#3*2.1   ){N} 
 
 \tkzDefPoint(#1+#2*(-.9   ) ,#3*0.8   ){O}
 \tkzDefPoint(#1+#2*(0.8   ) ,#3*0.0   ){P}
 \tkzDefPoint(#1+#2*(1.8   ) ,#3*0.2   ){Q}
 \tkzDefPoint(#1+#2*(2.4773) ,#3*1.0421){R}
 \tkzDefPoint(#1+#2*(2.9905) ,#3*2.1074){S}
 \tkzDefPoint(#1+#2*(2.8752) ,#3*3.0828){T}
 \tkzDefPoint(#1+#2*(1.4   ) ,#3*3.4   ){U} 
 \tkzDefPoint(#1+#2*(0.3   ) ,#3*3     ){V} 
 
 
 \tkzLabelPoint[below right](L){#5}
 \tkzLabelPoint[above right](M){#6}
 \tkzLabelPoint[below left](N){#4}
 \tkzLabelPoint[below ](U){$V^\alpha$}

 \node [color=red] at (M) {\textbullet};

 \draw plot [smooth cycle, tension=0.7] coordinates {
 (A) (B) (C) (D) (E) (F) (G) (H)
 };
 \draw plot [smooth cycle, tension=0.7] coordinates {
 (I) (J) (K) (L) 
 };
 \draw plot [smooth cycle, tension=0.7] coordinates {
 (O) (P) (Q) (R) (S) (T) (U) (V)
 };
}

\def\oa{1}
\def\ob{9}
\begin{figure}[H]
\centering
\begin{tikzpicture}[label]
 \DrawBean{\oa}{2}{2}{$\pi$}{$\pi^{\mathcal{M}}$}{$\pi^*$}
 \DrawOther{\ob}{2}{2}{$V^\pi$}{$V^{\mathcal{M}}$}{$V^*$}
\end{tikzpicture}
\caption{Depiction of the relationship between policy, value function, and optimal solution for $V^{\mathcal{M}}$}
\end{figure}
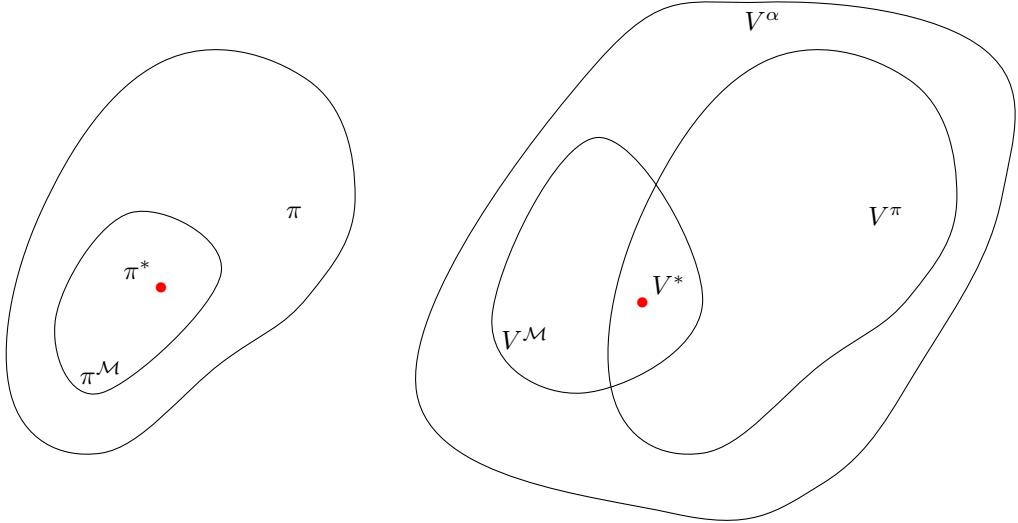
               
\begin{theorem}
At every state in $S^+$, there is a value function in $V^{\mathcal{M}}$ that is at least as large as any in $V^\pi$:

\begin{equation*}
\forall s \in S^+, \max_{M \in \mathcal{M}} V^M(s) \geq \max_{\pi \in \Pi} V^\pi(s)
\end{equation*}
\end{theorem}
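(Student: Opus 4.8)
Since Theorem~\ref{thm:opt_value_func} gives $\max_{\pi \in \Pi} V^\pi(s) = V^*(s)$, the statement is equivalent to: for every $s \in S^+$ there is some $M \in \mathcal{M}$ with $V^M(s) \geq V^*(s)$. The plan is to prove the stronger fact that for each such $s$ the number $V^*(s)$ is itself the value at $s$ of a single peak drawn from $\mathcal{P}_{\mathcal{B}} \cup \mathcal{P}_{\Gamma} \cup \mathcal{P}_{\Delta}$; taking $M$ to be the singleton containing that peak then yields $V^M(s) = V^*(s)$, and replacing $M$ by the maximiser over $\mathcal{M}$ only increases the left-hand side.

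Fix $s \in S^+$ and run the optimal policy $\pi^*$ from $s$. By the reward-collection theorem the resulting path collects finitely many rewards exactly once and then enters a minimum cycle at a local maximum $s_M$, where the one or two rewards lying in that cycle are collected infinitely. I would then decompose $V^*(s)$ by peeling the path apart at its ``reward events.'' Between two consecutive reward events the path lies entirely in $S^Z$, and by Theorem~\ref{thm:determinism} together with the strict effect of discounting, $\pi^*$ must traverse a shortest path there, so the value merely picks up a factor $\gamma^{\delta(\cdot,\cdot)}$. At an $S^+$ state $s_k$ that is not on the terminal cycle, $V^*(s_k) = r_k + \gamma V^*(s_k')$ for the $\pi^*$-successor $s_k'$, which is exactly the delta-peak form $\Delta^k = r_k + \mathcal{P}^j(s_k)$ once $\mathcal{P}^j$ is taken to be the propagation of the downstream value structure. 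At the terminal cycle, staying is optimal, so $V^*(s_M)$ equals the one-reward value $\bb^{s_M}$ of Theorem~\ref{thm:single_source_peak_value} or the two-reward value $\Gamma$ of the combined-baseline theorem. Composing these identities from $s_M$ back to $s$ by induction on the number of $S^+$-states remaining on the path, with base case the single-source result (Theorem~\ref{thm:val_function}), shows $V^*(s) = \mathcal{P}_P(s)$, where $P$ is the terminal cycle's baseline or combined peak if $s$ lies on that cycle, and otherwise a delta peak located at the first reward event reached from $s$, stacked recursively on top of the propagation of the downstream peak.

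The main obstacle I anticipate is the bookkeeping around delta peaks. Their definition requires the ``pre-existing value function'' to itself be a propagation, so I must check inductively that at every step of the peeling the downstream value structure is the propagation of a peak already realised within $\mathcal{P}_{\mathcal{B}} \cup \mathcal{P}_{\Gamma} \cup \mathcal{P}_{\Delta}$ (or by a shorter chain of deltas), so that the chained delta I build is a legitimate member of the families whose power set is $\mathcal{M}$, and that the distances used by successive propagations compose correctly — which is where the ``shortest path between reward events'' fact is used, justified exactly as in the combined-baseline proof: a detour would strictly lower discounted future reward and contradict optimality of $\pi^*$. A minor point is that the single- and two-reward formulas of Theorem~\ref{thm:single_source_peak_value} and the combined-baseline theorem must be seen to describe $V^*$ at states on the terminal cycle of the full multi-reward MDP, which holds because $\pi^*$ collects no other reward from those states. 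With these in hand, picking $M = \{P\}$ gives $V^M(s) = \mathcal{P}_P(s) = V^*(s) = \max_{\pi} V^\pi(s)$, and $\max_{M \in \mathcal{M}} V^M(s) \geq V^M(s)$ closes the argument.
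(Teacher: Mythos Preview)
Your approach is essentially the same as the paper's: both arguments reduce the claim to showing that at each $s_i \in S^+$ the value $V^*(s_i)$ coincides with one of the three peak types (baseline, combined, delta), invoking the reward-collection theorem to justify that these exhaust the possibilities. The paper's proof is considerably terser than yours --- it simply asserts that the set $\mathcal{M}_{s_i} = \{\mathcal{P}_{\mathcal{B}^i}, \mathcal{P}_{\Gamma^i}, \mathcal{P}_{\Delta^i}\}$ ``represents the value functions that can possibly result at state $s_i$'' and hence its maximum equals $V^*(s_i)$, whereas you actually construct the specific peak by following $\pi^*$ from $s$, peeling off reward events, and composing propagations back from the terminal cycle. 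Your trajectory-decomposition and the inductive bookkeeping you flag around chained delta peaks are precisely the details the paper leaves implicit; in that sense your write-up supplies the justification the paper's proof takes for granted.
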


\begin{proof}
From the previous section, we have identified the three possible ways that a reward $r_i$ at state $s_i \in S^+$ can be collected: the baseline $\mathcal{P}_{\mathcal{B}^i}(s)$ with peak value $\mathcal{B}^i$, the combined baseline $\mathcal{P}_{\Gamma^i}(s)$ with peak value $\Gamma^i$, and the delta baseline $\mathcal{P}_{\Delta^i}(s)$ with peak value $\Delta^i$ which we will denote as the set $\mathcal{M}_{s_i} \subset \mathcal{M}$.  

Given that the set $\mathcal{M}_{s_i}$ represents the values functions that can possibly result at state $s_i$, then there must be a maximum among them such that $\exists m_{max} \in \mathcal{M}_{s_i} | \forall m \in \mathcal{M}_{s_i}, \mathcal{M}_{s_i}(m_{max}) \geq \mathcal{M}_{s_i}(m)$.  The maximum possible value at $s_i$ is then defined by $\mathcal{M}_{s_i}(m_{max})$ and is thus equal to $V^*(s_i)$.  Given then that $V^*(s_i)$ is an upper bound at $s_i$ for both $V^\pi(s_i)$ and $\mathcal{M}_{s_i}(m_{max})$:

\begin{equation*}
\begin{split}
\forall s_i \in S^+, \max_{M \in \mathcal{M}_{s_i}} V^M &\geq \max_{\pi \in \Pi} V^\pi(s_i)
\end{split}
\end{equation*}

\end{proof}

We can extend the above to cover the entire value function:

\begin{theorem}
At every state in the whole of $S$, there is a value function in $V^{\mathcal{M}}$ that is at least as large as any in $V^\pi$:
\begin{equation*}
\forall s \in S, \max_{M \in \mathcal{M}} V^{M}(s) \geq \max_{\pi \in \Pi} V^\pi(s)
\end{equation*}
\end{theorem}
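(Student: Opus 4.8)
The plan is to bootstrap from the preceding theorem, which already establishes the inequality on $S^+$, to all of $S$. The two ingredients are: the fact (Theorem~\ref{thm:determinism}) that the optimal value at a zero-reward state is a $\gamma$-discounted copy of the optimal value at some positive-reward state, and a ``distance-Lipschitz'' property of the propagation operator $\mathcal P$ that every member of $V^{\mathcal M}$ inherits.

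First I would record the Lipschitz property. Each baseline, combined, or delta value function $M_i$ is a finite sum of at most two propagated peaks, each summand having the exponential-decay form $\gamma^{\delta(\cdot,s_k)}v_k$ with peak value $v_k>0$. For any two states $s,s'$ the triangle inequality for the transition-graph distance, $\delta(s,s_k)\le\delta(s,s')+\delta(s',s_k)$, together with $0<\gamma<1$ and $v_k>0$ gives $\gamma^{\delta(s,s_k)}v_k\ \ge\ \gamma^{\delta(s,s')}\bigl(\gamma^{\delta(s',s_k)}v_k\bigr)$; summing over the summands yields $M_i(s)\ge\gamma^{\delta(s,s')}M_i(s')$. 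Taking the maximum over $M_i\in M$ then gives, for every $M\in\mathcal M$ and all $s,s'\in S$,
\begin{equation*}
V^M(s)\ \ge\ \gamma^{\delta(s,s')}\,V^M(s').
\end{equation*}

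Next I would split into cases. If $s\in S^+$ the desired inequality is exactly the preceding theorem. If $s\in S^Z$, Theorem~\ref{thm:determinism} says that following the optimal policy from $s$ produces a chain terminating at some $s_p\in S^+$ with $V^*(s)=\gamma^{\,d}V^*(s_p)$ for the chain length $d$; since going directly along a shortest $s\!\to\!s_p$ walk and then acting optimally already has value at least $\gamma^{\delta(s,s_p)}V^*(s_p)$, optimality of $V^*$ forces $d=\delta(s,s_p)$, hence $V^*(s)=\gamma^{\delta(s,s_p)}V^*(s_p)$. Now pick $M^*\in\mathcal M$ with $V^{M^*}(s_p)\ge V^*(s_p)$, which exists by the preceding ($S^+$) theorem, and apply the Lipschitz inequality with $s'=s_p$:
\begin{equation*}
\max_{M\in\mathcal M}V^M(s)\ \ge\ V^{M^*}(s)\ \ge\ \gamma^{\delta(s,s_p)}V^{M^*}(s_p)\ \ge\ \gamma^{\delta(s,s_p)}V^*(s_p)\ =\ V^*(s)\ =\ \max_{\pi\in\Pi}V^\pi(s),
\end{equation*}
using $\max_\pi V^\pi(s)=V^*(s)$ as established earlier. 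Combining the two cases proves the claim for every $s\in S$.

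The step I expect to be the main obstacle is justifying $V^*(s)=\gamma^{\delta(s,s_p)}V^*(s_p)$ cleanly --- i.e.\ that the optimal chain through $S^Z$ has length exactly the graph distance to the reward state it feeds from, and, more generally, that $V^*$ on $S^Z$ is literally the propagation of $V^*$ on $S^+$ with no ``extra'' value available. This is morally the content of Theorem~\ref{thm:determinism}, but making it airtight for a general directed, possibly cyclic transition graph (rather than the symmetric Manhattan grid) requires care with the triangle inequality and with the case where several reward states compete to be the terminus of the chain; the termwise bookkeeping for combined peaks is a minor additional wrinkle.
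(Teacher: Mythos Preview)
Your proof is correct and follows essentially the same route as the paper: both reduce the $S^Z$ case to the already-established $S^+$ case via the exponential-decay structure of propagation together with Theorem~\ref{thm:determinism}. The paper phrases the key step slightly differently---it invokes monotonicity of propagation in the peak value at a fixed peak location (if $p_1\ge p_2$ then $\mathcal P_{p_1}(s)\ge\mathcal P_{p_2}(s)$ for all $s$)---whereas your Lipschitz inequality $V^M(s)\ge\gamma^{\delta(s,s')}V^M(s')$, obtained from the triangle inequality for $\delta$, is a cleaner and more explicit formulation of the same phenomenon and arguably closes a gap the paper leaves implicit. One simplification worth noting: you do not need the full equality $d=\delta(s,s_p)$. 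Since the optimal chain through $S^Z$ is itself a walk from $s$ to $s_p$ of length $d$, you automatically have $d\ge\delta(s,s_p)$, and hence $V^*(s)=\gamma^{d}V^*(s_p)\le\gamma^{\delta(s,s_p)}V^*(s_p)$, which is all your final chain of inequalities requires---so the ``main obstacle'' you flagged dissolves.
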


\begin{proof}
Given that we now know from the previous theorem the maximum value for all states in $S^+$, then from Theorem \ref{thm:determinism} we can say that the value of all states in $S$ are known.

To show that the values in $S^Z$ are maximum, we recall that the propagation operator $\mathcal{P}$ forms an exponential decay curve from the peak value $v_p$ at state $s_p$ of the form:

\begin{equation*}
\forall s \in S, \mathcal{P}_p(s) = v_p \times \gamma^{\delta(s,s_p)},
\end{equation*}

where $\delta(s,s_p)$ is the distance from $s$ to the peak at $s_p$.

The exponential decay curve has the property that at a given state $s_i$, if two peak values $p_1$ and $p_2$ are considered, and supposing that $p_1 \geq p_2$, then $\forall s \in S, \mathcal{P}_{p_1}(s) \geq \mathcal{P}_{p_2}(s)$. Thus, if we know the peak of the value functions in $s \in S^+$ are maximum, then we know that the corresponding exponential decay curve is maximum in $S^Z$ as well.
\end{proof}

\begin{theorem}
The optimal value function $V^*$ lies within $V^{\mathcal{M}}$ and is in fact the element-wise maximum of all value functions in $V^{\mathcal{M}}$.
\begin{equation*}
\forall s \in S, V^*(s) = \max_{M \in \mathcal{M}} V^M(s)
\end{equation*}
\end{theorem}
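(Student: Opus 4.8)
The plan is to prove the two pointwise inequalities $\max_{M \in \mathcal{M}} V^M(s) \ge V^*(s)$ and $\max_{M \in \mathcal{M}} V^M(s) \le V^*(s)$ for every $s \in S$, and then to exhibit an explicit $M^\star \in \mathcal{M}$ attaining the maximum, so that $V^*$ genuinely lies in $V^{\mathcal{M}}$ rather than the two sides merely agreeing numerically. The ``$\ge$'' inequality is immediate from what precedes: the immediately preceding theorem gives $\max_{M \in \mathcal{M}} V^M(s) \ge \max_{\pi \in \Pi} V^\pi(s)$ for all $s$, and Theorem~\ref{thm:opt_value_func} identifies $\max_{\pi \in \Pi} V^\pi(s)$ with $V^*(s)$; chaining the two yields the claim.

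\textbf{The ``$\le$'' direction.} This is the substantive step. It suffices to show that every single propagated component $M_i$ appearing in any $M \in \mathcal{M}$ is pointwise dominated by $V^*$, since then $V^M(s) = \max_{M_i \in M} M_i(s) \le V^*(s)$ for every $M$. I would prove this by realizability, casing on the three peak types. For a baseline $\mathcal{P}_{\mathcal{B}^i}$: the behavior ``follow a shortest path from $s$ to $s_i$, then cycle forever in the minimum cycle $\mathcal{C}^*(s_i)$'' is admissible and its discounted return from $s$ is at least $\gamma^{\delta(s,s_i)} \cdot r_i/(1-\gamma^{\phi(s_i)}) = \mathcal{P}_{\mathcal{B}^i}(s)$ --- ``at least'' because all rewards are non-negative so anything collected en route only helps --- hence $\mathcal{P}_{\mathcal{B}^i}(s) \le V^*(s)$. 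For a combined peak $\mathcal{P}_{\Gamma^{p,s}}$, the analogous behavior cycles through the minimum cycle of $s_p$ that also contains $s_s$, and the combined-baseline theorem proved above already carries out this bookkeeping. For a delta peak $\mathcal{P}_{\Delta^i}$ with $\Delta^i = r_i + \mathcal{P}^j(s_i)$: travelling to $s_i$, collecting $r_i$ once, and then following a policy realizing $\mathcal{P}^j$ achieves $\gamma^{\delta(s,s_i)}\,\Delta^i = \mathcal{P}_{\Delta^i}(s)$, which closes under an induction on the order in which peaks are generated, the base cases being the (already realizable) baselines and combined peaks. In all three cases $M_i \le V^*$ pointwise, so $\max_{M \in \mathcal{M}} V^M \le V^*$, and with the ``$\ge$'' direction we obtain $V^*(s) = \max_{M \in \mathcal{M}} V^M(s)$ for all $s$.

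\textbf{Attainment, hence $V^* \in V^{\mathcal{M}}$.} To finish I would choose $M^\star$ to consist, for each reward state $s_i \in S^+$, of the propagated peak of whichever type (baseline, combined, or delta) realizes the maximum value at $s_i$ --- exactly the selection used in the preceding theorems to show the maximum over $S^+$ equals $V^*$. On $S^+$ the function $V^{M^\star}$ equals $V^*$ by that choice, since no competing propagated curve can exceed $V^*(s_i)$ there by the domination just established; on $S^Z$, Theorem~\ref{thm:determinism} together with the exponential-decay property of $\mathcal{P}$ (a larger peak dominates a smaller one at every state) forces $V^{M^\star}$ to coincide with the discounted-future-reward values induced by the chosen peaks, which is $V^*$. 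Hence $V^{M^\star} = V^*$, so $V^*$ is a member of $V^{\mathcal{M}}$ and, by the two inequalities, the element-wise maximum of $V^{\mathcal{M}}$.

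\textbf{Main obstacle.} The delicate point is the ``$\le$'' direction: turning the paper's informal remark that ``a policy can be extracted from any $V^M$'' into the rigorous statement that every propagated component is bounded by $V^*$. The delta-peak case in particular requires the induction on processing order, and one must also invoke determinism of the MDP to know that the element-wise maximum of several realizable value functions is itself realizable --- which is precisely what upgrades the numeric identity $V^*(s) = \max_M V^M(s)$ to the membership claim $V^* \in V^{\mathcal{M}}$.
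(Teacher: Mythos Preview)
Your proposal is correct and in fact more explicit than the paper's own argument, but the route differs. The paper's proof does not separately argue the ``$\le$'' direction via realizability; instead it leans on the earlier claim (embedded in the proof of the $S^+$ theorem) that the three peak types \emph{exhaust} all ways a reward can be collected, so that $\max_{M} V^M(s_i) = V^*(s_i)$ is already known on $S^+$. The paper then spends its short proof extending this equality from $S^+$ to $S^Z$ via Theorem~\ref{thm:determinism}: since every $s_z \in S^Z$ has its optimal value determined by some peak in $S^+$, and since several peaks may compete, the optimal value at $s_z$ is the element-wise max over the propagated curves, which is exactly $\max_M V^M(s_z)$. Your approach instead treats the two inequalities symmetrically and proves ``$\le$'' by exhibiting, for each propagated component $M_i$, an admissible behavior that attains at least $M_i(s)$ (with the non-negativity of rewards doing the work). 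This buys you a self-contained argument that does not depend on the informal exhaustiveness claim, and your explicit construction of $M^\star$ makes the membership $V^* \in V^{\mathcal{M}}$ genuinely visible rather than implicit. The cost is the inductive bookkeeping for delta peaks, which the paper avoids by never isolating ``$\le$'' as a separate step.
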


\begin{proof}
From the above proofs, we know that at any state $s \in S^+$, the maximum possible value is $V^{max}(s) = \max_{M \in \mathcal{M}} V^M(s)$, and we know that the states in $S^Z$ can be determined by a peak in $S^+$.  However, there are multiple such peaks in $S^+$ which might determine the value of a given state $s_z \in S^Z$.

Recalling that the optimal value function is the maximum possible value at every state $s \in S$, and therefore that it is the maximum possible value at every state $s_z \in S^Z$, it is clear then that the maximum value at $s_z$ must be the maximum of all possible value functions in $M$ evaluated at $s_z$.

\begin{equation*}
\forall s_z \in S^Z, V^*(s_z) = \max_{M \in \mathcal{M}} M(s_z)
\end{equation*}

Given that $V^{M}(s) = \max_{M_i \in M} M_i(s), \forall s \in S$, this is equivalent to:
\begin{equation*}
\forall s_z \in S^Z, V^*(s_z) = \max_{M \in \mathcal{M}} V^M(s_z)
\end{equation*}

Thus we now know the maximum value at every state in both $S^Z$ and $S^+$, and therefore $S$ as a whole:

\begin{equation*}
\forall s \in S, V^*(s) = \max_{M \in \mathcal{M}} V^{M}(s)
\end{equation*}
\end{proof}

We may therefore conclude that the optimal value function $V^*$ is the max over each state $s \in S$ of the value function from the possible combinations of the peaks in $\mathcal{M}$.  This forms the core of the algorithm and completes the proof that the algorithm calculates the optimal value function $V^*$.

\end{document}